\documentclass[twoside,11pt]{article}

\usepackage[T1]{fontenc}
\usepackage[utf8]{inputenc}
\usepackage[preprint,nohyperref]{jmlr2e}
\usepackage{amsmath}
\usepackage{booktabs}
\usepackage{algorithm}
\usepackage{algpseudocode}
\usepackage{xcolor}
\usepackage{colortbl}
\usepackage{tikz}
\usetikzlibrary{positioning,arrows.meta,fit}
\usepackage{pgfplots}
\pgfplotsset{compat=1.16}
\usepackage{lastpage}
\usepackage[colorlinks=false,allbordercolors={1 1 1}]{hyperref}
\hypersetup{
  hidelinks,
  pdftitle={MaSRead: Content-Addressed Reading of Replicated Latent Stores},
  pdfauthor={Carlos Baquero, Luis Brito, Joao Resende},
  pdfkeywords={latent communication, multi-agent systems, key-value caches,
    content addressing, conflict-free replicated data types}
}

\newcommand{\MaSRead}{MaSRead}

\title{\MaSRead{}: Content-Addressed Reading of Replicated Latent Stores}

\author{\name Carlos Baquero \email cbm@fe.up.pt \\
  \addr FEUP \& INESC TEC, Universidade do Porto, Porto, Portugal
  \AND
  \name Lu\'{\i}s Brito \email britoluis@estg.ipvc.pt \\
  \addr ESTG, Instituto Polit\'ecnico de Viana do Castelo,
  Viana do Castelo, Portugal
  \AND
  \name Jo\~ao Resende \email jresende@fc.up.pt \\
  \addr DCC, FCUP, Universidade do Porto, Porto, Portugal}

\jmlrheading{0}{2026}{1--\pageref{LastPage}}{}{}{}
  {Carlos Baquero, Lu\'{\i}s Brito, and Jo\~ao Resende}
\ShortHeadings{\MaSRead{}: Reading Replicated Latent Stores}
  {Baquero, Brito, and Resende}
\firstpageno{1}

\begin{document}
\maketitle

\begin{abstract}
Independent agents that reason in latent space can share computed state as
key--value cache fragments rather than text. Merged by a conflict-free replicated
data type, these fragments form a store that converges under any delivery order or
duplication. Yet a later query, unknown at encode time, cannot reliably read the
merged cache: colocated fragments interfere, so colocation is not addressability.
\MaSRead{} addresses the read to content. It routes through opaque keyed tag sets
derived from fragment words and decodes each selected fragment under a hard
attention mask that hides the rest. Under lexical connectivity, a graph walk reaches
the fragments required by a multi-hop query. Across chain, pipeline, symmetric,
hub, and natural-language stores, \MaSRead{} recovers visited fragments in
isolation, remains effective as unrelated fragments accumulate, and transfers to
another model family. After routing, materialized decoding depends on fragment
length rather than total store size; end-to-end work still includes store-dependent
routing and one read per visited fragment. The limits are explicit: lexical routing
can miss disconnected evidence, and answer composition remains bounded by the
frozen reader. Thus a replicated latent store becomes selectively readable for later
queries when the needed fragments connect to the query through content.
\end{abstract}

\begin{keywords}
latent communication, multi-agent systems, key-value caches, content addressing,
conflict-free replicated data types
\end{keywords}

\section{Introduction}
\label{sec:intro}

A growing line of work has language-model agents reason and communicate in
latent space rather than in text: instead of exchanging natural-language
messages, each agent encodes its input into the transformer's key--value (KV)
cache and passes that latent state on
\citep{yu2026latentspacefoundationevolution, zou2025latentmas,
jin2026agentprimitives}.
When many agents contribute, their latent states accumulate into a shared
store, a replicated collection of KV fragments, each the distilled product of
one agent's reasoning over its own input (Figure~\ref{fig:system}). An agent
monitoring a long stream of routine events need not forward the stream; it could
instead reason over it and contribute only the latent of the one anomaly it found,
so that the store holds distilled reasoning rather than raw text. That compact,
anomaly-only \emph{direct latent} contribution remains the paradigm's motivation,
not a property of our main experiments: their fragments are query-blind encodings
of text. We do, however, exercise the nearest implementation supported by the
present mechanism. A query-independent writer applies a fixed sensor rule to its
local log and emits a compact finding; that finding, rather than the raw log, is
encoded as a KV fragment and read only after replication.

This exercise also makes the signature interface concrete. The compact record
contains controlled descriptor words such as \emph{anomaly}, \emph{outlier}, and
\emph{spike}; an authorized writer maps those words to opaque HMAC tags, while the
timestamp and numerical measurement remain only in the KV payload and never enter
the signature. A later query reaches the fragment through a descriptor tag and
recovers the number under the mask; a timestamp-only query cannot route. On 20
frozen logs, three contamination levels, and four query wordings, the writer
extracts the exact anomalous value on $18/20$ logs, the routed masked read recovers
it on $216/240$ query instances, and final composition returns it on $208/240$.
Raw-log and timestamp-only controls never route. A separate single-log probe shows
retention and salience in an undistilled query-blind cache
(Appendix~\ref{app:anomaly}).

\begin{figure}[t]
\centering
\resizebox{\textwidth}{!}{%
\begin{tikzpicture}[
  font=\small, >={Stealth[round]},
  input/.style={draw, rounded corners=1pt, align=center, minimum width=34mm,
    minimum height=10mm, inner sep=2pt, fill=gray!7},
  agent/.style={draw, rounded corners=2pt, align=center, minimum width=20mm,
    minimum height=10mm, inner sep=2pt, fill=blue!8},
  lat/.style={draw, align=center, minimum width=19mm, minimum height=10mm,
    inner sep=2pt, fill=orange!16},
  store/.style={draw, rounded corners=2pt, align=center, fill=green!7,
    minimum width=29mm, minimum height=32mm},
  read/.style={draw, rounded corners=2pt, align=center, minimum width=36mm,
    minimum height=11mm, inner sep=2pt},
]
\node[input] (in1) at (0,1.2) {temperature log\\[-1pt]{\scriptsize 20.3\ \ 20.1\ \ \textbf{87.6}\ \ 20.0\ $\ldots$}};
\node[input] (in2) at (0,0) {document};
\node[input] (in3) at (0,-1.2) {document};
\node[agent] (aA) at (3.5,1.2) {agent A};
\node[agent] (aB) at (3.5,0) {agent B};
\node[agent] (aC) at (3.5,-1.2) {agent C};
\node[lat] (lA) at (5.7,1.2) {latent$_A$};
\node[lat] (lB) at (5.7,0) {latent$_B$};
\node[lat] (lC) at (5.7,-1.2) {latent$_C$};
\node[store] (store) at (8.4,0)
  {merged\\ latent store\\[3pt]{\scriptsize CRDT: set-union,}\\{\scriptsize order/dup.\ robust}};
\node[align=center] (q) at (8.4,2.45) {query $Q$\\[-1pt]{\scriptsize (unknown at encode)}};
\node[read, fill=red!12] (bad) at (11.9,0.7)
  {colocated read\\[-1pt]{\scriptsize interferes}\ {\color{red!80}$\times$}};
\node[read, fill=green!16] (good) at (11.9,-0.7)
  {\MaSRead{}\\[-1pt]{\scriptsize content-addressed}\ {\color{green!45!black}\checkmark}};
\node[font=\scriptsize\itshape] at (3.5,1.95) {distill (query-blind)};
\draw[->] (in1) -- (aA);
\foreach \i/\a in {in2/aB,in3/aC} \draw[->] (\i) -- (\a);
\foreach \a/\l in {aA/lA,aB/lB,aC/lC} \draw[->] (\a) -- (\l);
\foreach \l in {lA,lB,lC} \draw[->] (\l.east) -- (store.west);
\draw[->] (q) -- (store.north);
\draw[->, red!70]         (store.east) -- (bad.west);
\draw[->, green!50!black] (store.east) -- (good.west);
\end{tikzpicture}}
\caption{The setting. Independent agents encode their inputs into KV-cache
fragments \emph{query-blind}, distilling reasoning rather than forwarding text
(in the motivating picture, agent~A contributes only the anomaly it found in a long
stream). The
fragments merge into one replicated store under a conflict-free, set-union
merge. A query posed later, unknown when the fragments were made, is answered
by reading the store: a naive colocated read interferes, while the
content-addressed masked signature read (\MaSRead{}) routes to lexically connected
fragments, reads them in isolation, and composes an answer.}
\label{fig:system}
\end{figure}

Such a store is only useful if it can be read, and the reading problem is
awkward in a way ordinary retrieval is not. The fragments are encoded
\emph{query-blind}, before any question is posed, so a query that arrives later
must be answered from latent state that no contributing agent shaped for it.
The obvious approach is to \emph{colocate} the fragments, laying their caches
side by side and reading the concatenation~\citep{jin2026agentprimitives}. We
show this fails. Colocated
fragments interfere: a read aimed at one fragment is corrupted by the others,
and the corruption worsens as the store grows (Section~\ref{sec:failure}).
Colocation is not addressability.

We repair the read with content addressing. Each fragment carries a
content-derived \emph{signature}, an enumerable set of opaque keyed tags derived
from its words, and an authorized query client produces tags in the same domain.
The store routes by tag overlap, then decodes each selected fragment under a hard
attention \emph{mask} that admits only that fragment's block, so the others cannot
interfere. The routing is lexical, and it is the mask, not the routing, that our
control isolates: a read under the wrong mask returns the wrong fragment's value
rather than a cleaned-up answer, which is what makes the repair addressing rather
than denoising. A query is answered by a short program over the store: seed on the
query, walk a signature graph to the fragments it needs, read each under its mask,
and compose the results (Section~\ref{sec:masread}). We call this the \emph{masked signature read}
(\MaSRead{}). The store beneath it is a conflict-free replicated data type:
fragments are identified by a content hash and merged by set union, so it
converges regardless of the order or the multiplicity in which fragments arrive,
tolerating duplicate transmissions (Section~\ref{sec:store}).

\paragraph{What we learn.}
The central result is not merely that one read implementation works. Before a
fragment enters the store, \emph{payload formation} must decide what local finding
to preserve; the routed anomaly exercise shows that an incorrect writer record is
faithfully routed and cannot be repaired downstream. The store/read path then has
six separable obligations, and satisfying an earlier one does not satisfy the next.
\emph{Storage and convergence} preserve the same immutable
fragments at every replica, but do not make any one fragment selectively readable.
\emph{Routing} must find the fragments required by a later query; our opaque
signature walk does so only when a lexical path connects the query to them.
\emph{Addressing} must then expose the selected fragment without exposing its
neighbors: the wrong-mask control shows that the hard mask selects which fragment
is read, rather than merely denoising the colocated store. \emph{Recovery} asks
whether the selected cache can be decoded into the content it holds, a
model-dependent step measured by fragment restatement. \emph{Isolation} asks
whether unrelated stored fragments can alter that recovery; masking supplies
selective access empirically, and materializing a located block makes its
fragment-level decoder input structurally independent of the rest of the store.
Finally, \emph{composition} asks whether the reader can turn the recovered
fragments into the final answer. It remains a separate, reader-bounded operation:
successful routing and recovery do not imply successful reasoning over what was
recovered.

This decomposition makes an end-to-end score interpretable and is the paper's main
general lesson. Failure can mean that evidence was absent, missed by routing,
misaddressed, decoded incorrectly, contaminated by other fragments, or recovered
but not composed; the remedies differ. Our controls localize these cases:
colocation fails at addressing although the information remains recoverable; the
lexical walk fails on disconnected evidence; and some answer errors remain after
full coverage and near-exact restatement, locating the residual in composition
(Sections~\ref{sec:masread} and~\ref{sec:eval-boundary};
Appendices~\ref{app:anomaly} and~\ref{app:precision}). In the anomaly exercise,
the same accounting attributes all masked-value misses to two incorrect target
writer records, while $8/216$ correct masked recoveries are subsequently lost in
composition. Where routing holds, isolated recovery remains
effective as unrelated fragments accumulate, including on natural-language stores
(Sections~\ref{sec:eval-scale} and~\ref{sec:eval-nl}). Once a fragment is located
and materialized, its decode cost depends on that fragment rather than total store
size, although routing and the number of reads remain store- and query-dependent.

We claim addressability, robustness to contamination, and read cost, not
superiority over text retrieval. Pipelines that store and re-read text are a
different regime: they presuppose that the source text is retained, which the
setting we study does not provide. We therefore treat text retrieval as
orthogonal rather than as a baseline, and account for the trade-off, read cost
included, honestly in Section~\ref{sec:disc-textret}.

Content addressing is one way to read a replicated latent store, by what a
fragment contains; addressing by where a fragment sits in a canonical order is a
complementary direction we leave to future work. Our claim
is narrow and, we believe, consequential: once agents reason in latent space,
the store they produce can be read for queries they never saw, when those queries
are connected by content to the fragments they need, and, once a fragment is
located, at a read cost that does not grow with the store, provided it is
addressed rather than merely colocated.

\subsection{Contributions}
\label{sec:contributions}
\begin{itemize}
\item \textbf{A diagnosis.} We show that a query-blind, CRDT-merged store of
key-value fragments cannot be read by decoding over the colocated whole: the
fragments interfere and the failure worsens as the store grows, and a wrong-mask
control identifies the missing operation as addressing rather than denoising
(Sections~\ref{sec:failure} and~\ref{sec:masread}).
\item \textbf{A read mechanism.} We repair the read with the \emph{masked signature
read} (\MaSRead{}): route to a fragment by a lexical signature of its content,
decode it under a hard attention mask that isolates its block, and walk a signature
graph to reach the several fragments a multi-hop query needs
(Section~\ref{sec:masread}).
\item \textbf{An evaluation.} We measure the read query-blind across four store
structures, under contamination to many times the fragments a query needs, on
natural-language multi-hop questions, and across two model families, and we separate
the read's accuracy from the agents' latent computation. A fixed-rule sensor
exercise additionally follows a compact writer finding through opaque routing,
masked recovery, and final composition. Where the lexical walk has
coverage, the read recovers required fragments, isolates their recovery as unrelated
content accumulates, and reads a located materialized fragment at a cost independent
of store size (Section~\ref{sec:eval};
Appendices~\ref{app:anomaly}, \ref{app:scale}, \ref{app:llama},
and~\ref{app:determinism}).
\item \textbf{Two honest boundaries.} We locate rather than hide where the method
stops: the store delivers the facts reliably, but composing them into an answer is
bounded by the reader (Section~\ref{sec:eval-boundary}), and the lexical routing
fails deterministically when a required fragment shares no word with the query
(Appendix~\ref{app:precision}).
\end{itemize}

\section{The replicated latent store}
\label{sec:store}

\subsection{The replicated latent-store read problem}
\label{sec:task}

Writer $i$ receives a local input $x_i$ and, before the eventual query is known,
produces an immutable element
$e_i=(\mathrm{id}_i,\mathrm{KV}_i,T_i,m_i)$. Here $\mathrm{KV}_i$ is its cache
block, $T_i$ is the raw opaque-tag sidecar, and $m_i$ fixes the model, encoding,
layout, and tag-schema metadata needed to interpret the element. A replica holds
a set $S\subseteq\{e_i\}$ and merges received state by union. A later query $q$
has a task-dependent required set $R(q)\subseteq S$: this set is available only
for evaluation, not to the router. The router returns an ordered visited set
$V(q,S)$; a fragment reader maps each visited element to a recovered output
$r_f$; and a composer maps $(q,(r_f)_{f\in V})$ to the final answer.

This decomposition separates three questions that a single accuracy number can
confound. \emph{Addressing} asks whether a located fragment can be exposed without
admitting the others. \emph{Routing} asks whether $R(q)\subseteq V(q,S)$.
\emph{Answering} asks whether the composer can use the recovered outputs. MaSRead
supplies the first operation, implements the second with a lexical walk, and
delegates the third to the frozen reader. Table~\ref{tab:properties} records the
resulting contract.

\begin{table}[t]
\centering
\small
\begin{tabular}{@{}p{4.7cm}p{9.2cm}@{}}
\toprule
property & status in this paper \\
\midrule
convergence & structural for complete immutable elements under deterministic encoding and rendering \\
\arrayrulecolor{black!14}\specialrule{0.3pt}{1pt}{1pt}\arrayrulecolor{black}
query blindness & provided by constructing each element before the eventual query is known \\
\arrayrulecolor{black!14}\specialrule{0.3pt}{1pt}{1pt}\arrayrulecolor{black}
selective recovery & provided for a located fragment by masking or materialization \\
\arrayrulecolor{black!14}\specialrule{0.3pt}{1pt}{1pt}\arrayrulecolor{black}
contamination isolation & structural for one located, materialized fragment; visited distractors can still affect composition \\
\arrayrulecolor{black!14}\specialrule{0.3pt}{1pt}{1pt}\arrayrulecolor{black}
routing coverage & conditional on lexical connectivity, measured as $R(q)\subseteq V(q,S)$ \\
\arrayrulecolor{black!14}\specialrule{0.3pt}{1pt}{1pt}\arrayrulecolor{black}
semantic routing & not provided \\
\arrayrulecolor{black!14}\specialrule{0.3pt}{1pt}{1pt}\arrayrulecolor{black}
final-answer composition & delegated to and bounded by the frozen reader \\
\arrayrulecolor{black!14}\specialrule{0.3pt}{1pt}{1pt}\arrayrulecolor{black}
located-fragment read cost & depends on the fragment length, not $|S|$, after materialization \\
\arrayrulecolor{black!14}\specialrule{0.3pt}{1pt}{1pt}\arrayrulecolor{black}
end-to-end query cost & includes store-dependent routing and one read per visited fragment \\
\arrayrulecolor{black!14}\specialrule{0.3pt}{1pt}{1pt}\arrayrulecolor{black}
fully latent read & not provided: endpoints use lexical metadata and recovery uses text restatements \\
\bottomrule
\end{tabular}
\caption{The replicated latent-store read contract. Structural statements concern
state or decoder inputs, not model accuracy; empirical conditions are measured in
Section~\ref{sec:eval}.}
\label{tab:properties}
\end{table}

\subsection{Store state and convergence}

We now make the store precise, because its structure is what makes the read
problem hard. Each agent, having encoded its input query-blind under one shared,
frozen model, contributes a \emph{fragment}: the key--value cache of that encode
and an immutable lexical-addressing sidecar, named together by a content identifier.
The sidecar contains opaque pseudorandom tags rather than words; its construction is
defined in Section~\ref{sec:masread}. Because the model, normalization schema, and encoding
are fixed and deterministic, the element and its name are a function of the input alone: two
agents that encode the same input produce the same fragment under the same name,
and two that encode different inputs produce distinct fragments, so identity is
exact rather than semantic. The identifier binds the cache, sidecar, schema and
public key identifier, so two different sidecars cannot occupy the same element
identity. The state of the store is the set of fragments it has
received, and it records which fragments are present and nothing else: not when a
fragment arrived, nor in what company, nor how many times.

Replicas of the store synchronize by exchanging and merging states, and the
merge is set union. Merging a fragment already present changes nothing, and two
replicas hold the same set whatever the order in which they saw its fragments, so
the merge is commutative, associative, and idempotent. The store is thus a
grow-only set, the simplest state-based conflict-free replicated data type: under
eventual delivery, replicas that gossip and merge states converge to the union of
all fragments ever added, independent of the order, grouping, or multiplicity of
delivery \citep{shapiro2011crdt,preguica2018crdts}. Fragment identity is by
content hash, so convergence is exact up to hash collisions, which we take to be
negligible.

To be read, the set is \emph{rendered} into a single cache
by placing each fragment's block at a canonical range of positions, which fixes
its rotary (RoPE) coordinates; because that placement is a deterministic
function of the set, the render is byte-identical across delivery orders, a
construction we take from prior work on convergent cache merging and
describe in Appendix~\ref{app:determinism} \citep{baquero2026cachemerging}.

\begin{proposition}[Convergence of complete elements]
\label{prop:convergence}
Assume that writers use the same model, tokenizer, normalization and tag schema,
that encoding and rendering are deterministic, that elements are immutable and
completely named by their payload identity, and that hash collisions are
negligible. Replicas that receive the same elements hold the same state
independently of delivery order, grouping, and duplication. For a fixed tagged
query, deterministic rendering and tag traversal with the content-derived
tie-break also produce the same render and visit order.
\end{proposition}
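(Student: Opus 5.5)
The argument has two layers: a state-level claim that follows from the grow-only-set structure, and a read-level claim that follows from determinism of everything computed from that state.

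\textbf{State layer.} First, fix what an element is. By the assumptions, an element $e_i=(\mathrm{id}_i,\mathrm{KV}_i,T_i,m_i)$ is a deterministic function of the writer's input $x_i$ alone: the shared model, tokenizer and normalization make $\mathrm{KV}_i$ a function of $x_i$, and the fixed tag schema makes $T_i$ one too. The identifier binds cache, sidecar, schema and key identifier. So, up to negligible hash collisions, equal identifiers imply equal elements. A replica's state can therefore be modeled as a finite set $S$ of elements, equivalently the set of their identifiers.

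Next, a delivery history is any finite sequence of elements, with repetitions allowed and possibly interleaved with merges of other replicas' states. The state after the history is the fold of $\cup$ over it, starting from $\emptyset$. Set union is commutative, associative and idempotent, so by induction on the length of the history this fold equals the set of distinct elements that occur in it. This is the standard join-semilattice argument for grow-only sets \citep{shapiro2011crdt,preguica2018crdts}. Hence two replicas that receive the same elements, in any order, grouping or multiplicity, hold the same $S$.

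\textbf{Read layer.} Fix a tagged query $q$. The render is a deterministic function $\rho(S)$: blocks are placed at canonical positions determined by a canonical ordering of the set, for example by identifier (Appendix~\ref{app:determinism}, \citealp{baquero2026cachemerging}). It depends only on $S$, not on arrival order, so equal states give byte-identical renders.

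For the visit order, treat the signature walk as a deterministic procedure whose inputs are the query tags and the tag sidecars $\{T_f\}_{f\in S}$. At each step the walk computes tag overlaps, which are functions of sets, and chooses the next fragment. Whenever several candidates have equal score, it breaks the tie by the content-derived identifier, which is a total order on distinct elements. Induction on the step index then shows the chosen sequence is a function of $(q,S)$ alone, so $V(q,S)$ agrees across replicas.

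\textbf{Main obstacle.} The delicate step is stating precisely what "deterministic" must cover. The scoring and tie-break must not depend on any replica-local data structure, such as hash-map iteration order or insertion order. The encoding must also be bitwise deterministic. I would make both requirements explicit hypotheses rather than prove them, since they are assumed in the statement. I would also note the collision caveat: if two distinct elements shared an identifier, set membership would not determine content and the conclusion could fail, which is exactly why that case is excluded.
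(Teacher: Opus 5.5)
Your proposal is correct and follows essentially the same route as the paper's proof. Convergence comes from the commutativity, associativity and idempotence of set union, and agreement of render and visit order comes from every quantity being a deterministic function of $(S,q)$, with the content-hash tie-break fixing the order; you simply spell out the two inductions the paper leaves implicit. The paper also explicitly lists the set-wide boilerplate intersection $B_S$ among these deterministic functions, and your argument already covers it because you take all sidecars $\{T_f\}_{f\in S}$ as the walk's input.
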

\begin{proof}
Set union is commutative, associative, and idempotent, so every delivery history
with the same delivered elements yields the same $S$. The boilerplate
intersection, final tag sets, canonical layout, overlap scores, and content-hash
tie-break are deterministic functions of $(S,q)$; applying them to equal inputs
therefore yields equal renders and walks. The proposition concerns agreement, not
whether the walk reaches $R(q)$ or whether the model decodes correctly.
\end{proof}

One property of this state shapes everything that follows: it is a \emph{set},
identified by content. Rendering it does place each fragment at a position in the
layout, but that position is fixed by the layout, not by the query, so a query
has no position to ask for; its only handle is content. A query, which the
fragments were not encoded to answer, must therefore be answered by what the
fragments contain. The rendered cache is exactly the colocated store the next
section reads: reading it whole, we will see, affords no such handle, and
Section~\ref{sec:masread} recovers one by reading the same render under a mask.

\section{Colocation is not addressability}
\label{sec:failure}

The store of Section~\ref{sec:store} is read by rendering its fragments into one
colocated cache and decoding over the whole. We now show that this read fails,
and fails worse as the store grows. Figure~\ref{fig:kramp} takes a store of $k$
same-family fragments, renders them together, and asks, query-blind, for the
value held by one designated fragment. At $k=2$ the colocated read is already
unreliable, at $0.43$; by $k=8$ it has collapsed to $0.00$. The fragments are
not lost: the same store, read one fragment at a time under a mask to an
oracle-designated target block
(Section~\ref{sec:masread}), answers every query, at $0.99$ or above for every $k$. The
information is present in the store, but reading it whole does not give reliable
selective access to it, and the loss grows with the store. Here the target
fragment is oracle-designated solely to isolate recovery; the figure is not
evidence that the lexical walk finds it. Query-driven routing is evaluated later
(Section~\ref{sec:eval}).

\begin{figure}[t]
\centering
\begin{tikzpicture}
\begin{axis}[
  width=0.80\textwidth, height=5cm,
  xlabel={store size $k$ (colocated fragments)}, ylabel={read accuracy},
  xmode=log, log basis x={2}, xtick={2,4,8,16}, xticklabels={2,4,8,16},
  ymin=-0.05, ymax=1.07, ytick={0,0.25,0.5,0.75,1},
  legend pos=south east, legend cell align={left},
  grid=major, grid style={gray!16},
  tick label style={font=\small}, label style={font=\small},
  legend style={font=\small},
]
\addplot[red!75, thick, mark=*, error bars/.cd, y dir=both, y explicit]
  table[x=k, y=acc, y error plus=ep, y error minus=em] {
    k acc ep em
    2 0.43 0.098 0.093
    4 0.08 0.070 0.039
    8 0.00 0.037 0.000
    16 0.00 0.037 0.000
  };
\addlegendentry{colocated read}
\addplot[green!45!black, thick, dashed, mark=square*, error bars/.cd, y dir=both, y explicit]
  table[x=k, y=acc, y error plus=ep, y error minus=em] {
    k acc ep em
    2 1.00 0.000 0.037
    4 1.00 0.000 0.037
    8 1.00 0.000 0.037
    16 0.99 0.008 0.044
  };
\addlegendentry{designated masked read}
\end{axis}
\end{tikzpicture}
\caption{Colocation is not addressability. A store of $k$ same-family fragments
is rendered together and read, query-blind, for the value in one designated
fragment ($n=100$ per point; bars are $95\%$ Wilson intervals). The target block is
supplied by an oracle in both conditions, so this diagnostic tests isolation, not
routing. The colocated read collapses as the store grows, while the same fragments
read one at a time under a mask to the target block stay at or near $1.00$. The gap
is the cost of not addressing.}
\label{fig:kramp}
\end{figure}

The failure is interference, not loss. Reading over the colocated cache, a read
aimed at one fragment is not confined to it: the answer it commits to is drawn
from the wrong fragment or fused across several. Asked for the value in fragment
$a$, the colocated read returns the value that sits in fragment $b$; asked to
apply one fragment's rule, it applies another's. We name this by the substituted
answers we observe, not by a measured attention pattern, so we call it
interference without asserting a specific route through attention. The more
fragments share the store, the more neighbours there are to confuse it, and the
read collapses within a few fragments rather than degrading gently.

This is the concrete form of the mismatch named in Section~\ref{sec:store}. The
store is a set identified by content, with no position a query can name; the
render supplies positions, but reading across them is what corrupts the answer.
To read the store is therefore to \emph{isolate} the fragment a query wants from
the rest, which colocation does the opposite of. The next section builds that
isolation, and shows with a control that what it restores is addressing rather
than a merely cleaner read.

Two qualifications frame everything that follows, because the failure above is a
worst case. It is sharpest when the fragments are lexically similar, as here, and
when the store is large; on a small store of dissimilar fragments a colocated read
can be adequate, and on clean natural text it can match an addressed read, as the
evaluation will show (Section~\ref{sec:eval}). The value of addressing is not that
colocation always fails but that, once a target is located, its isolated recovery
does not admit neighbouring fragments as similarity, store size, or contamination
changes, where a colocated read does.
That invariance is bought at a price, the read's dependence on routing recall: a
fragment a query cannot reach by content is a fragment the addressed read cannot
use, and there a colocated read that keeps everything in view can do better
(Appendix~\ref{app:precision}). The evaluation reports both the invariance and its
price.

\section{Content-addressed reads: the masked signature read}
\label{sec:masread}

Section~\ref{sec:failure} left the read needing to isolate the fragment a query
wants from the rest of the colocated store. We supply that isolation by addressing
the read to content, in three operations of which only the last is new here.
Fragments are identified by content hash (Section~\ref{sec:store}), which merges
and deduplicates the store but is not a handle a query holds. A query instead
\emph{routes} to fragments by a lexical \emph{signature}, and each located fragment
is then \emph{read} under a hard attention mask.

A writer normalizes its fragment text locally: lowercase alphabetic content words,
simple stemming, function-word removal, and no digits. Under a store-scoped secret
key $K$, each normalized word $w$ becomes the first $128$ bits of a domain-separated
$\operatorname{HMAC\mbox{-}SHA256}(K,d\mathbin\Vert w)$, where $d$ is the ASCII
prefix \texttt{masread/signature/v1} followed by a zero byte. The writer sends the resulting enumerable
tag set with the cache and may then discard the source text. Authorized query
clients hold the same key and tag their query locally; storage replicas need
neither the key nor any plain signature word. Tags are pseudorandom identifiers,
but equality is preserved within one key epoch.

The transmitted set is the fragment's raw tag set. From the convergent element set
$S$, every replica removes the set-wide boilerplate $B_S$, the intersection of all
raw tag sets, and uses $\mathrm{sig}_S(f)=T_f\setminus B_S$ as the final signature.
The set must remain enumerable because the walk of Algorithm~\ref{alg:masread}
grows its frontier by unioning a visited fragment's tags. A $128$-bit Bloom filter
over the final set may accelerate membership tests but cannot replace it. On the
synthetic families digits never enter the tagger, so the numerical answers are not
in the addressing metadata. On natural-language paragraphs an answer word may be
tagged, but a keyless replica sees only its stable opaque identifier.

A query is tagged into the same domain, and the read routes to the fragments whose
signatures its tags select, preferring the largest overlap. This
is lexical routing, a relevance-ranked walk rather than an exact address lookup,
and we claim no more for it. Having located a fragment, the read decodes over the
render under a hard attention \emph{mask} that admits only that fragment's block
and hides every other, so the interference of Section~\ref{sec:failure} cannot
arise: the model attends to one fragment and reads it as if it stood alone. The
control in Table~\ref{tab:mech} isolates this masking step; routing and
composition are measured separately in Section~\ref{sec:eval}.

A query is answered by a short program over the store (Algorithm~\ref{alg:masread}).
It \emph{seeds} a frontier with the query's tags and matches them, by set
membership, against the fragments' signatures; each matched fragment is visited,
and its own signature tags are unioned into the frontier, so the search expands
without ever enumerating a Bloom filter. A fragment reached
only through a tag that another fragment supplied is exactly a second hop. The
relevance-prioritized graph walk continues over edges that join fragments sharing a
lexical tag until no unvisited signature overlaps the frontier. Each visited fragment is then \emph{read} under its mask,
and the recovered facts are \emph{composed} into an answer. We call the whole a
\emph{masked signature read} (\MaSRead{}).

\begin{algorithm}[t]
\caption{The masked signature read (\MaSRead{}).}
\label{alg:masread}
\begin{algorithmic}[1]
\Require tagged query $T_q$; store where each fragment $f$ carries raw tag set $T_f$
\State $B \gets \bigcap_f T_f$; $\mathrm{sig}(f)\gets T_f\setminus B$
\State $F \gets T_q$ \Comment{frontier of opaque lexical tags}
\State $V \gets [\,]$ \Comment{visited fragments, in order}
\While{some unvisited $f$ has $\mathrm{sig}(f) \cap F \neq \emptyset$} \Comment{a set-membership test}
  \State $f \gets$ the unvisited fragment of largest overlap with $F$
  \State append $f$ to $V$; \quad $F \gets F \cup \mathrm{sig}(f)$ \Comment{grow the tag frontier}
\EndWhile
\State \Return $\textsc{Compose}\big(\textsc{MaskedRead}(f) \text{ for each } f \in V\big)$
\Statex \hspace{1.2em}\emph{\footnotesize$\textsc{MaskedRead}(f)$: decode $f$'s block in the render under a hard mask that hides every other block.}
\end{algorithmic}
\end{algorithm}

\begin{proposition}[Isolation of a materialized fragment read]
\label{prop:isolation}
Fix a located element $f$. Let materialization extract $f$'s KV block from the
canonical render and reverse its layout offset, restoring the native coordinates
at which it was encoded. A decoder supplied only that materialized block has a
fragment-level input independent of every element in $S\setminus\{f\}$.
Consequently, adding, removing, or changing other elements cannot change this
fragment-level computation.
\end{proposition}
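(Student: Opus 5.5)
The plan is to show that the materialized block is a function of $f$ alone, and then to argue that any computation fed only that block inherits the same independence. The natural route is to write the whole pipeline as a composition of maps and check at each stage which arguments the output actually depends on. Let $\rho(S)$ denote the canonical render of Appendix~\ref{app:determinism}, and let $\pi_S(f)$ be the position range assigned to $f$'s block in that render. Materialization is then the map $M(S,f)=\mathrm{Unrot}_{\pi_S(f)}\big(\rho(S)|_{\pi_S(f)}\big)$: restrict the render to $f$'s range, then apply the inverse rotary rotation for that range's offset, returning to the native positions $0,\dots,|f|-1$.

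\textbf{Step 1: the restricted block is the placed copy of $\mathrm{KV}_f$.} Rendering places each element's own cache block at its canonical range; it does not recompute the block jointly with other elements. Hence $\rho(S)|_{\pi_S(f)}$ equals $\mathrm{KV}_f$ with rotary keys rotated by the offset $o_S(f)=\min\pi_S(f)$. Values are untouched, and keys become $R_{o}\,k$ for a block-diagonal rotation $R_o$. The offset $o_S(f)$ can depend on $S$, since other elements shift the layout, so the restricted block alone is not yet independent of $S\setminus\{f\}$.

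\textbf{Step 2: reversing the offset removes the only $S$-dependence.} RoPE rotations form a group, with $R_{-o}R_{o}=R_0=I$. Applying $\mathrm{Unrot}_{o}=R_{-o}$ to the keys returns exactly the native-coordinate keys of $\mathrm{KV}_f$, and the values are unchanged. Therefore $M(S,f)=\mathrm{KV}_f$ for every $S\ni f$. This right-hand side is a component of the immutable element $e_f$ and depends on nothing else in $S$. I expect this to be the \textbf{main obstacle}, for two reasons. First, one must justify that the render stores pre-computed per-element blocks rather than re-encoding in context. The store description supports this: fragments are encoded independently and then placed. Second, one must state precisely that un-rotation is exact. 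It is exact as an algebraic identity; in floating point, exactness needs either that the render keeps native blocks plus offsets, or that rotation is applied deterministically and is bit-invertible. I would make the claim about the idealized map, and note that an implementation which materializes by reading the stored element $\mathrm{KV}_f$ directly satisfies it trivially.

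\textbf{Step 3: conclude.} The decoder computation is some function $D(q,M(S,f))=D(q,\mathrm{KV}_f)$. For any $S'$ obtained from $S$ by adding, removing, or changing elements other than $f$, we still have $f\in S'$, so $M(S',f)=\mathrm{KV}_f=M(S,f)$. The fragment-level computation is therefore identical. As in Proposition~\ref{prop:convergence}, the statement concerns decoder inputs, not whether $D$ decodes correctly or whether routing selects $f$. Visited distractors can still affect $\textsc{Compose}$, in line with Table~\ref{tab:properties}.
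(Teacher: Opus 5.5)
Your proposal is correct and follows the same route as the paper: after extraction and offset reversal, the decoder input depends only on $f$ (plus the fixed model and restatement prompt), so no key or value tensor from $S\setminus\{f\}$ is present. Your Steps 1--2 spell out, via $R_{-o}R_o=I$ for RoPE, the exactness of coordinate restoration that the paper's short proof assumes without comment, and your floating-point caveat is a fair refinement the paper does not address.
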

\begin{proof}
After extraction and coordinate restoration, the decoder inputs are a function
only of $f$, the fixed model, and the fixed restatement prompt. No key or value
tensor from another element is present. The statement makes no accuracy claim and
does not cover routing or composition: changing $S\setminus\{f\}$ may change
whether $f$ is visited, may add other visited restatements, and may thereby change
the final answer.
\end{proof}

The tags hide plain signature words from a replica without $K$, not the structure
of the index. A replica observes repeated tags, signature sizes, co-occurrence,
the induced graph, repeated query tags, and which fragments a query visits; a key
holder can also test dictionary words. This is the familiar leakage trade-off of
searchable symmetric indexes \citep{curtmola2011sse,cash2015leakage}, not
zero-knowledge search. We use one key shared by authorized writers and query
clients. An oblivious pseudorandom function could keep that key from mutually
distrustful writers, at the cost of an online cryptographic service, which we
leave to future work \citep{miao2020psi}.

\begin{table}[t]
\centering
\small
\begin{tabular}{lc}
\toprule
read & accuracy \\
\midrule
open (colocated) & 0.625 \\
masked (target block) & 1.000 \\
wrong-mask control (partner's block) & 0.000 \\
\bottomrule
\end{tabular}
\caption{Mechanism panel: Qwen3-1.7B, a two-fragment store read query-blind for
the value in one designated fragment ($n=40$, at the shared-unit similarity rung; a
separate draw from Figure~\ref{fig:kramp}, whose open-read level varies with draw
and rung, Appendix~\ref{app:precision}). The colocated read interferes; masking
to the target block restores retrieval; a read under the \emph{wrong} mask
returns the partner fragment's value, at $0.000$ correct. The mask selects which
fragment is read. The same ordering holds at 4B and 8B
(Appendix~\ref{app:scale}).}
\label{tab:mech}
\end{table}

Table~\ref{tab:mech} isolates the mechanism on the smallest store that
interferes: two same-family fragments, read query-blind for the value in one of
them. The open read over the pair scores $0.625$; masking to the target block
restores it to $1.000$. The decisive row is the control: a read under the
\emph{wrong} mask, admitting the partner's block instead of the target's, scores
$0.000$, and its answers are the partner's value. This is what makes the repair
\emph{addressing} rather than denoising. The mask does not clean up a noisy read;
it selects which fragment is read, and under the wrong selection the model
faithfully returns the wrong fragment. Read accuracy is a function of which block
the mask admits, which is what it means for the mask to be an address, and it
recovers the isolation Section~\ref{sec:failure} showed colocation destroys.
Both this panel and Figure~\ref{fig:kramp} designate the target block, so they
measure the mask on its own; whether the walk of Algorithm~\ref{alg:masread}
routes to that fragment from a query it never saw is a separate question, and the
coverage it reaches is measured throughout Section~\ref{sec:eval}.

A word on what these probes contain: the fragments hold explicit facts, so the
read can be graded against a gold value; the distilled-reasoning premise of
Section~\ref{sec:intro}, where a fragment holds the product of an agent's
reasoning rather than a stated fact, is exercised separately in
Appendix~\ref{app:anomaly}. Two properties of the read we also defer. The
signature is lexical, a summary of content words, which fixes a precision
boundary we return to in Section~\ref{sec:disc-textret}. And the masked read
above reads each block at its position in the render; a materialized variant
reads it at a cost independent of store size, which we use in
Section~\ref{sec:eval-scale}.

\section{Evaluation}
\label{sec:eval}

Sections~\ref{sec:failure} and~\ref{sec:masread} measured the mask with a
designated target. We now measure the full read, routing included, end to end:
each store is read query-blind by Algorithm~\ref{alg:masread} with no oracle
anywhere, so the walk must find the fragments from the query's lexical tags, read them
under masks, and compose. Alongside accuracy we report two diagnostics that
localize any failure: \emph{coverage}, the fraction of a query's required
fragments the walk actually reaches (routing), and whether each reached fragment
is restated correctly (recovery). A gap that survives both is composition, in the
reader rather than the store.

\begin{table}[t]
\centering
\footnotesize
\begin{tabular}{@{}p{4.6cm}p{4.0cm}p{5.3cm}@{}}
\toprule
claim & evidence & boundary or control \\
\midrule
colocated fragments interfere & Figure~\ref{fig:kramp} and Table~\ref{tab:ladder} & similarity ladder shows the effect is strongest for lexically similar fragments \\
\arrayrulecolor{black!14}\specialrule{0.3pt}{1pt}{1pt}\arrayrulecolor{black}
the mask acts as an address & Table~\ref{tab:mech} & wrong-mask control returns the partner rather than a denoised target \\
\arrayrulecolor{black!14}\specialrule{0.3pt}{1pt}{1pt}\arrayrulecolor{black}
query-driven routing can reach multi-hop evidence & coverage in Tables~\ref{tab:headline} and~\ref{tab:nl} & natural-text misses and the \textsc{discount} family expose the lexical-recall boundary \\
\arrayrulecolor{black!14}\specialrule{0.3pt}{1pt}{1pt}\arrayrulecolor{black}
isolated recovery survives irrelevant accumulation & Figure~\ref{fig:sweep} and Table~\ref{tab:nl} & topical distractors are visited and their separate restatements can still affect composition \\
\arrayrulecolor{black!14}\specialrule{0.3pt}{1pt}{1pt}\arrayrulecolor{black}
a located materialized read is store-size independent & Proposition~\ref{prop:isolation} and Table~\ref{tab:rebased} & fixed decoded-block length, not an end-to-end latency result \\
\arrayrulecolor{black!14}\specialrule{0.3pt}{1pt}{1pt}\arrayrulecolor{black}
recovery and composition are distinct & Table~\ref{tab:boundary} & symbolic composer controls for whether the read output determines the answer \\
\arrayrulecolor{black!14}\specialrule{0.3pt}{1pt}{1pt}\arrayrulecolor{black}
writer finding, routing, recovery, and composition are distinct & Table~\ref{tab:routed-anomaly} & frozen fixed-rule sensor evaluation retains writer false positives, misses, and truncated compositions \\
\bottomrule
\end{tabular}
\caption{Claim--evidence map. Each claim is paired with the control or failure
boundary that limits its scope. The routed anomaly exercise is a fixed-rule
computed-payload example, not unsupervised anomaly discovery.}
\label{tab:evidence-map}
\end{table}

\subsection{Breadth across store structures}
\label{sec:eval-breadth}

We evaluate on synthetic stores of four families, chosen to span distinct
structures; Table~\ref{tab:families} shows one instance of each. Two are
sequential: a \emph{chain} of fictional-unit conversions and an affine
\emph{pipeline} of machines, each fragment's value feeding the next. One is
\emph{symmetric}, a constraint system whose three fragments are solved jointly
with none ordering the others. The last is a \emph{hub}, a single dense fragment
indexed against two small tables. Every store has three fragments, encoded
query-blind (construction in Appendix~\ref{app:repro}).

\begin{table}[t]
\centering
\small
\renewcommand{\arraystretch}{1.25}
\begin{tabular}{@{}l p{6.1cm} p{2.4cm} c@{}}
\arrayrulecolor{black}
\toprule
family & the three fragments (encoded query-blind) & query & answer \\
\midrule
chain & ``1 wog $=$ 3 wibbles''; ``1 wibble $=$ 2 blargs''; ``the race is 24 blargs long'' & how many wogs is the race? & $4$ \\
\arrayrulecolor{black!14}\specialrule{0.3pt}{1pt}{1pt}\arrayrulecolor{black}
pipeline & ``machine one: $\times 3$, then $+2$''; ``machine two: $\times 2$, then $+1$''; ``start with 5 widgets'' & output of machine two? & $35$ \\
\arrayrulecolor{black!14}\specialrule{0.3pt}{1pt}{1pt}\arrayrulecolor{black}
symmetric & ``Alice is $3\times$ Bob's age''; ``Bob is 2 years older than Carol''; ``together they are 23'' & how old is Alice? & $15$ \\
\arrayrulecolor{black!14}\specialrule{0.3pt}{1pt}{1pt}\arrayrulecolor{black}
hub & ``base: red 2, blue 3, green 4''; ``size: small $\times1$, large $\times2$''; ``collected 2 large red, 3 small green, 1 large blue'' & total points? & $26$ \\
\bottomrule
\end{tabular}
\caption{One instance of each store family ($k=3$ fragments, abbreviated from the
generator templates; the generators randomize the words and numbers). The chain
and pipeline pass one value along a sequence; the symmetric system couples all
three fragments; the hub packs several items into its third fragment, each to be
cross-referenced against the other two. Only the hub reads below ceiling
(Table~\ref{tab:headline}, Section~\ref{sec:eval-boundary}).}
\label{tab:families}
\end{table}

Table~\ref{tab:headline} reads each store query-blind, pooled over three data
seeds ($n=300$ per cell).

\begin{table}[t]
\centering
\small
\begin{tabular}{lcccccc}
\toprule
store & masked read & ceiling & colocated & no-protocol & coverage & hub \\
\midrule
chain              & $0.97\,[.94,.98]$ & $1.00$ & $0.93$ & $0.08$ & $1.00$ & -- \\
pipeline           & $1.00\,[.99,1.0]$ & $1.00$ & $0.90$ & $0.04$ & $1.00$ & -- \\
symmetric          & $0.99\,[.97,1.0]$ & $0.99$ & $0.98$ & $0.02$ & $1.00$ & -- \\
hub                & $0.44\,[.39,.50]$ & $0.92$ & $0.19$ & $0.04$ & $1.00$ & $1.00$ \\
\bottomrule
\end{tabular}
\caption{The masked signature read across four store structures ($k{=}3$;
Qwen3-1.7B; query-blind; pooled over three data seeds, $n=300$ per cell).
\emph{masked read} is \MaSRead{} end to end; \emph{ceiling} is the full-text
single-context answer, a reference a masked read can match or exceed rather than an
upper bound; \emph{colocated} reads the same extracted facts alongside the whole merged render
rather than in isolation, the naive colocation of Section~\ref{sec:failure};
\emph{no-protocol} poses the query over the merged render alone, with neither the
addressing nor the extracted facts;
\emph{coverage} is the fraction of required fragments the walk reaches; \emph{hub} is
the multi-value readout (hub store only). The bracketed interval on the masked read
is the $95\%$ Wilson binomial (Appendix~\ref{app:stats}).}
\label{tab:headline}
\end{table}

The read reaches the full-text ceiling on three of the four families: the chain,
the pipeline, and the symmetric store all land at $0.97$ or above, against
ceilings at or near $1.00$, so this is not one lucky structure.
Two diagnostics say why. Coverage is $1.00$ everywhere: the walk reaches every
required fragment from the query alone, which is where the routing deferred in
Sections~\ref{sec:failure} and~\ref{sec:masread} is validated, and the
near-ceiling reads are content-addressed end to end, not oracle-masked. And the
no-protocol control sits at floor, never above $0.08$: the colocated store holds
the same content, yet read without addressing it yields almost nothing, so the
read's accuracy is addressing, not extra information. On every structure the
colocated read is below the masked read, replaying the interference of
Section~\ref{sec:failure}.

The hub is the exception, at $0.44$. It is not a routing or a reading failure:
coverage and the hub's multi-value readout are both $1.00$, so the store returns
every fact the query needs. What the frozen reader then fails is composing those
facts into the answer, a read-versus-answer boundary we take up in
Section~\ref{sec:eval-boundary}.

\subsection{Scaling with contributing agents}
\label{sec:eval-scale}

A shared store grows as agents contribute, and most of what accumulates is
irrelevant to any one query. We test whether the read survives this: we take a
store that reads well, merge in unrelated fragments drawn from a disjoint family
that shares no words with the query, and read for the original target as the
store fills. Figure~\ref{fig:sweep} sweeps the number of unrelated fragments $D$
from $0$ to $64$, more than twenty times the fragments a query needs.

\begin{figure}[t]
\centering
\begin{tikzpicture}
\begin{axis}[
  width=0.80\textwidth, height=5cm,
  xlabel={unrelated fragments added, $D$}, ylabel={read accuracy},
  xmin=-3, xmax=67, xtick={0,8,16,32,64},
  ymin=-0.05, ymax=1.07, ytick={0,0.25,0.5,0.75,1},
  legend style={at={(0.97,0.45)}, anchor=east, font=\small}, legend cell align={left},
  grid=major, grid style={gray!16},
  tick label style={font=\small}, label style={font=\small},
]
\addplot[green!45!black, thick, mark=square*]
  coordinates {(0,0.92)(4,0.90)(8,0.88)(16,0.94)(32,0.96)(64,0.90)};
\addlegendentry{masked read (\MaSRead{})}
\addplot[red!75, thick, mark=*]
  coordinates {(0,0.94)(4,0.68)(8,0.16)(16,0.04)(32,0.02)(64,0.08)};
\addlegendentry{colocated read}
\end{axis}
\end{tikzpicture}
\caption{Scaling under contamination. A chain store is read for its target as $D$
unrelated fragments, sharing no query word, are merged in ($n=50$ per point). The
masked read stays near $0.90$ out to $D=64$, over twenty times the fragments a
query needs; the colocated read over the same store collapses. Coverage is
$1.00$ and the walk visits none of the $D$ unrelated fragments at every point.}
\label{fig:sweep}
\end{figure}

The masked read holds near $0.90$ across the whole sweep, undiminished as the store
fills with unrelated fragments to twenty times what a query needs. The colocated read over the same store
collapses, from $0.94$ to near zero by $D=16$, a collapse consistent with the
long-context degradation a colocated read is subject to \citep{liu2024lostinmiddle}.
Why the addressed read does not move is visible in the walk: coverage stays at $1.00$ and it visits \emph{zero}
of the unrelated fragments at every $D$, because a fragment that shares no word
with the query has an empty signature intersection and never becomes a candidate.
Contamination cannot reach the read: the walk refuses the junk, and each masked
read isolates its target regardless of what else the store holds.

Cost separates into routing, per-fragment recovery, and composition:
\begin{equation}
\label{eq:cost}
T_{\mathrm{total}}(q,S)
=T_{\mathrm{route}}(q,S)
+\sum_{f\in V(q,S)}T_{\mathrm{read}}(L_f)
+T_{\mathrm{compose}}(|V(q,S)|).
\end{equation}
Only the reading of a single located fragment is independent of the store. The
full read is not: the walk scans the signatures and visits every fragment that
shares a frontier tag, so $T_{\mathrm{route}}$ generally grows with $|S|$ and,
on a lexically connected store, $|V|$ can approach $|S|$. Every visited fragment
is then read and included in composition. What does not grow is the cost of reading
one already located fragment. An in-place masked read does not have this
property, because it attends over the whole render, whose length grows with every
fragment merged; a \emph{materialized} read does: the target block is extracted and
decoded at the native positions it was encoded at (Section~\ref{sec:store}), so its
cost is the block's own length $O(L_f)$, independent of the store size $|S|$
(Appendix~\ref{app:determinism} measures a fixed $147$-token decode as the render
grows roughly eightfold). The accuracy results above use the in-place read; the
materialized read is the same read relocated to fixed coordinates, and it is what
the read-cost claim of Section~\ref{sec:intro} refers to, once a fragment is
located. We report decoded-block length, not wall-clock latency, so this is an
asymptotic argument about the located read, not an end-to-end timing result.

\subsection{Answering unknown queries on real text}
\label{sec:eval-nl}

The synthetic stores let us control structure; they do not tell us whether the
read survives the lexical noise of real language. We repeat the evaluation on two
standard multi-hop question-answering sets: MuSiQue 2-hop \citep{trivedi2022musique},
the answerable subset, and HotpotQA bridge \citep{yang2018hotpotqa}, $k=2$. Each
item's supporting paragraphs become the fragments,
encoded query-blind, and we read for the answer the encoding never saw. To probe
contamination we add $d$ of the dataset's own distractor paragraphs, which are
\emph{topical}: they are about the same entities and share vocabulary with the
query, so unlike the disjoint junk of Section~\ref{sec:eval-scale} the walk has no
lexical reason to skip them. Table~\ref{tab:nl} reports token-overlap $F_1$
($n=50$ per row). We report coverage here, as on the synthetic families, but not the
per-fragment restatement match: on free-text paragraphs it is not a clean recovery
diagnostic, since a non-empty restatement is not evidence that the answer-bearing
fact was recovered, so on natural language the end-to-end $F_1$ and coverage are
what we measure.

\begin{table}[t]
\centering
\small
\setlength{\tabcolsep}{4pt}
\begin{tabular}{llccccc}
\toprule
dataset & store & masked read & ceiling & colocated & no-protocol & cover/junk \\
\midrule
MuSiQue 2-hop & clean ($d{=}0$)        & $0.31$ & $0.58$ & $0.41$ & $0.52$ & $0.75$\,/\,$0$ \\
              & distractors ($d{=}8$)  & $0.44$ & $0.58$ & $0.03$ & $0.03$ & $1.00$\,/\,$8$ \\
\midrule
HotpotQA $k{=}2$ & clean ($d{=}0$)      & $0.58$ & $0.72$ & $0.65$ & $0.61$ & $0.90$\,/\,$0$ \\
                & distractors ($d{=}8$) & $0.57$ & $0.72$ & $0.03$ & $0.07$ & $1.00$\,/\,$8$ \\
\bottomrule
\end{tabular}
\caption{Real multi-hop text, query-blind ($n=50$ per row). Cells are $F_1$;
exact match follows the same pattern (MuSiQue $0.22$ then $0.34$; HotpotQA $0.42$
then $0.44$). Columns as in Table~\ref{tab:headline};
\emph{cover/junk} is coverage of the supporting paragraphs and the number of the
$d$ distractors the walk visited. The masked read uses a $64$-token restatement
budget; rerun at $256$ tokens the outputs are unchanged.}
\label{tab:nl}
\end{table}

The contamination result of Section~\ref{sec:eval-scale} holds on real text, and
against harder noise. With eight topical distractors the masked read holds,
$0.44$ on MuSiQue and $0.57$ on HotpotQA, while both unaddressed reads collapse to
near $0.03$: real distractors destroy a colocated read and leave the addressed one
where it was. The mechanism is not the one from Section~\ref{sec:eval-scale}.
There the walk could refuse disjoint junk; here the distractors share the query's
words, so the walk cannot avoid them and in fact visits all eight (junk-visited
$=8$, coverage $1.00$). The read survives anyway, because what protects it is the
mask, not the routing: each fragment is read in isolation, so a visited distractor
produces its own restatement but cannot corrupt a supporting fragment's masked
read, whereas colocation fuses them and the answer degrades. A visited distractor's
restatement does still enter the composition, so isolation bounds the damage rather
than removing it; that the read holds here means the reader can set the irrelevant
restatements aside, not that they were kept out. On real text the isolation does the
work that avoidance did on synthetic stores.

Two honesty notes. First, on a clean store the masked read does not win: with no
distractors to interfere, both unaddressed reads are as good or better (colocated
$0.41$ and no-protocol $0.52$ against $0.31$ on MuSiQue, $0.65$ and $0.61$ against
$0.58$ on HotpotQA), and the full-text ceiling is higher still. The read's value is robustness to contamination, not clean-store
accuracy, and the honest baseline is that colocation is fine until the store is
contaminated. Even so, these are reads of the store and not recalled answers: a
closed-book baseline, the same questions with no store at all, scores $F_1$ $0.07$
on MuSiQue and $0.10$ on HotpotQA, so the store's contents, not the question alone,
carry the answer. Second, coverage on the clean store is below one, $0.75$ on MuSiQue
and $0.90$ on HotpotQA: the lexical walk sometimes fails to reach the second hop
when the query and the bridging paragraph share no surface word. This lexical
dependence is the read's soft spot. It is also why the masked read rises from the
clean store to the contaminated one on MuSiQue, from $0.31$ to $0.44$: the fuller
walk over the larger store recovers second hops the clean walk missed (coverage
$0.75$ to $1.00$). A routing index that tolerates paraphrase rather than shared
tokens would remove the dependence; we return to it in
Section~\ref{sec:discussion}. The synthetic worst case makes the dependence
deterministic: on the DISCOUNT family (Appendix~\ref{app:precision}) one required
fragment shares no routing word with the query or the others, so the walk never
reaches it on any item, the masked read collapses to $0.01$, and a colocated read
that keeps that fragment in view scores $0.84$. The contamination-robustness of
Section~\ref{sec:eval-scale} is therefore conditional on routing recall, and the
store answers a query only when the query is connected by content to the fragments
the answer needs.

\subsection{It is addressing, not latent computation}
\label{sec:eval-deconfound}

One confound remains. The agents in this line reason in latent space before they
encode, so a masked read might not be addressing a stored fact at all; it might be
recovering something the agent \emph{computed} during that reasoning, in which case
the read would be entangled with latent computation and would fade if the agent did
less of it. We cut the confound by varying the agent's latent budget $\ell$, the
number of latent reasoning steps it takes while encoding a fragment, down to
$\ell=0$ where it does none and the cache holds only the encoded text. Reading off
masked-read accuracy across $\ell \in \{0, 20, 40\}$ (full grid, $n=100$ per cell,
in Appendix~\ref{app:ablation}), three of the four structures are flat: the chain
holds $0.93$ at every $\ell$, the pipeline and the symmetric store stay at
or above $0.99$. Removing latent computation entirely costs nothing. The read
recovers the encoded fact; it does not depend on the agent having reasoned over it,
which is what it means for the read to be addressing.

The hub is again the exception, rising with $\ell$ from $0.15$ to $0.40$. This is
not the addressing being computed: coverage and hub extraction are $1.00$ at every
$\ell$, so the walk reaches the fragment and the store returns its full contents
regardless of the budget. What moves with $\ell$ is downstream of the store's
output, in composing the recovered values into the answer, and we do not localize
it further; for the hub alone, then, the encoding agent's latent computation does
move the final answer, though it moves neither what is addressed nor what is
extracted. The deconfound also marks a claim we do
not make. Distilled latent state is why sharing caches is worthwhile
(Section~\ref{sec:intro}), since an agent can pass computed state that has no text
form, but the read does not require it: it fetches whatever the cache holds,
computed or merely encoded.

\subsection{The boundary: reading versus answering}
\label{sec:eval-boundary}

Every result so far has set the hub aside, and it is time to collect the debt.
The hub is where reading and answering come apart, and the gap has a precise
location. Table~\ref{tab:boundary} decomposes the hub read into a ladder, from
what the store returns to what the reader does with it.

\begin{table}[t]
\centering
\small
\begin{tabular}{lc}
\toprule
stage & score \\
\midrule
latent readout (coverage / restatement / hub extraction) & $1.00$ / $1.00$ / $1.00$ \\
decoded facts parseable                                   & $0.99$ \\
symbolic composition over the decoded facts               & $\mathbf{0.99}$ \\
\midrule
frozen reader composing the same facts                    & $\mathbf{0.40}$ \\
full-text ceiling (single context)                        & $0.94$ \\
\bottomrule
\end{tabular}
\caption{The hub, decomposed (the hub family of Table~\ref{tab:headline}, a single
query-blind draw, $n=100$). The store returns every fact (top block); a symbolic
composer, a deterministic rule over the decoded facts rather than the model,
answers at $0.99$, so the read's output determines the answer. The frozen reader
handed the same facts answers at $0.40$, below even its full-text ceiling of
$0.94$. Nearly all the residual is composition, in the reader.}
\label{tab:boundary}
\end{table}

The store does its part completely. Coverage, restatement, and hub extraction are
all $1.00$, and the decoded facts are parseable at $0.99$: the read returns every
value the answer needs, in usable form. The decisive line is the next one. A
symbolic composer, a fixed rule applied to those decoded facts instead of the
model, answers at $0.99$. Nothing the answer requires is missing from the read;
the read's output already determines the answer. Yet the frozen reader, handed the
same decoded facts, answers at $0.40$, below even the $0.94$ it reaches reading the
full text in one context. Nearly all of the residual is in the reader's
composition, not in the store or the routing; the symbolic composer is at $0.99$,
not $1.00$. And because the model reaches $0.94$ from the full text, part of what
it fails is composing from the read's terse restatements rather than composition in
the abstract.

The hub is where this shows because it concentrates many values in one fragment.
In the store of Table~\ref{tab:families}, a single fragment lists an inventory,
\emph{2 large red, 3 small green, 1 large blue}, and answering means looking each
item up in the two rate tables and summing, so the reader must select and combine
within one dense restatement, the operation it does worst; the chain, pipeline,
and symmetric stores put one value in each fragment and never trigger it. Nor is the
copy budget the constraint: read at a $256$-token budget the hub still answers
$0.36$, no better than at $64$, so the limit is composing the recovered values, not
fitting them into the restatement. We do not claim to close this gap, and locating it
is the point. It is exactly the boundary the thesis draws: a content-addressed
read makes the relevant facts reliably available, and turning them into an answer
is bounded by the reader. A stronger or task-tuned reader would move the $0.40$,
but that is a property of the reader; the read has already done its work, as the
symbolic composer's $0.99$ attests.

\section{Related work}
\label{sec:related}

For a broad map of latent-space methods spanning reasoning, memory, and
communication, see the recent survey of \citet{yu2026latentspacefoundationevolution};
we position here against the specific lines that bear on reading a merged store.

\paragraph{Latent multi-agent communication.}
A recent line has agents exchange hidden state rather than text. LatentMAS
\citep{zou2025latentmas} prepends one agent's KV cache to the next in sequence;
Agent Primitives \citep{jin2026agentprimitives} run parallel solvers whose caches a
selector consumes; Interlat \citep{interlat2025}, cache-to-cache transfer
\citep{fu2025cachetocache}, and thought communication \citep{zheng2025thoughtcomm}
pass latent messages between models, and RecursiveMAS \citep{yang2026recursivemas}
loops hidden states through a trained link. These papers study how to \emph{produce
and transfer} fragments; none, to our knowledge, defines how to \emph{read} an
unordered store of them for a query the producers never saw. The closest reader, the
Agent Primitives selector, sees every candidate at once, but its candidates are
redundant complete solutions to be voted among, not jointly necessary fragments to
be composed, so colocating them does not trigger the interference we study; the
analogue is close rather than exact.

\paragraph{KV caches: compression, reuse, and the addressing gap.}
Most KV-cache work optimizes either how much cache to keep, through eviction and
merging \citep{xiao2024streamingllm, zhang2023h2o, li2024snapkv, kvmerger2024}, or
how fast to reuse it, through block-structured serving and prefix caching
\citep{kwon2023pagedattention, gim2024promptcache, ma2024blockattention}; neither
asks which cache entry answers a given query. Closest to our setting is a
2025--2026 cluster that reuses \emph{independently encoded} chunks and reports the
failure we call colocation is not addressability: naively concatenating separately
computed caches breaks the cross-attention a joint prefill would have computed, at a
large accuracy cost \citep{cacheblend2025, ape2025, kvlink2025, epic2025}. Their
remedy is to \emph{repair} the union so the chunks read together again, by
recomputing a token subset \citep{cacheblend2025, epic2025}, aligning the attention
distribution \citep{ape2025}, or training link tokens \citep{kvlink2025}; ProphetKV
even steers the repair with the query \citep{prophetkv2026}. Like us, these systems
precompute each chunk's cache before any query, so encode-time query-blindness is
not our difference. The difference is at combine time: they assemble a per-query
retrieved set and repair its union with the query in hand, whereas we hold one
persistent, growing, query-independent store and never repair it, masking all but
the one addressed fragment so the interference cannot arise rather than being
recomputed away. These systems diagnose a related failure of joint reading from
independently encoded caches, which is evidence the phenomenon is real, but none of
them tests selective access to a designated fragment; addressing, not repair, is the
operation we add.

\paragraph{Attention as content-addressable memory.}
Treating a cache read as an address lookup has precedent. Modern Hopfield networks
show the transformer's own update rule is an associative-memory retrieval
\citep{ramsauer2021hopfield}, and that this retrieval degrades from clean
single-pattern recovery to a blurred average as the stored patterns grow numerous or
similar, which is the mechanistic reason a naive read over a contaminated pool is not
a clean lookup and the motivation for our hard mask. Memorizing Transformers
\citep{wu2022memorizing} make the read explicit, querying an external key-value
memory by approximate nearest neighbor, and Landmark Attention
\citep{mohtashami2023landmark} routes to a block through a learned per-block token,
the closest precedent for route-then-read. These retrievals are soft,
similarity-based, and learned within one model's own sequence; ours routes by lexical
overlap on an opaque tag signature, with optional Bloom acceleration \citep{bloom1970}, over
multi-party fragments merged after the fact. Parallel Context Windows \citep{pcw2023} is the
ancestor of masking-as-isolation, but it isolates every window uniformly so that a
fixed reader can attend to all of them, whereas we expose exactly one fragment by
content.

\paragraph{Learned latent memory.}
A parallel line builds latent memory that is written and later read, almost always
with a trained reader or writer: differentiable cache augmentation
\citep{liu2025dca}, uncertainty-triggered synthesis \citep{hou2026flashmem},
role-aware or self-generated agent memory \citep{fu2026latentmem, zhang2025memgen},
and reconstruction-oriented factual memory \citep{zhang2026nextmem}. These stores are
learned, and their memory is generated or consolidated rather than routed to. What
distinguishes ours is not single- versus multi-agent, since some of these are
multi-agent too, but that it is a replicated, multi-writer set merged by content
hash and training-free: fragments keep the exact caches their producers wrote, are
identified by content hash, and are read by a fixed signature match, with no learned
reader between the store and the answer.

\paragraph{Reading versus composing.}
Mechanistic studies of latent reasoning support our split between what the store
delivers and what the reader must do. Latent tokens transport and scaffold
computation rather than perform it \citep{latenttokens2026}, and multi-hop
composition is built by a model's own attention over a sequential chain rather than
by colocating independent vectors \citep{superposition2025}. Parallel Context Windows
report the same split from the systems side, helping comparison questions answerable
from independently read windows but hurting bridge questions that need one window
conditioned on another \citep{pcw2023}. This is why a merged store needs an active,
addressed read, and why, once the facts are recovered, composing them into an answer
remains the frozen reader's job, as the hub result and its widening on a weaker
reader (Appendix~\ref{app:llama}) show.

\paragraph{Replicated state and text retrieval.}
The merge our store inherits is a conflict-free replicated data type
\citep{shapiro2011crdt, preguica2018crdts}, established for these caches in the
companion work \citep{baquero2026cachemerging}; CRDT ideas have reached neural state
at the weight level \citep{gillespie2026crdtmerge}, but without any read or query
semantics. Finally, our read is not text retrieval. Retrieval-augmented generation
\citep{lewis2020rag} and its latent-space successors \citep{he2025clara} fetch text
or learned document representations selected for a known query; we address raw KV
fragments produced query-blind, and we set out the honest trade this makes against
text retrieval in Section~\ref{sec:disc-textret}.

\section{Limitations and discussion}
\label{sec:discussion}

The limitations follow the same routing--recovery--composition decomposition as
the task definition. First, routing is lexical. A query reaches a fragment
only through shared lexical terms (represented as equal tags), so a needed fragment that shares no routing token is
never read: on the DISCOUNT family this happens on every item, and a colocated read
that keeps everything in view does better there than the addressed read
(Appendix~\ref{app:precision}). On natural text the same dependence shows as a
coverage below one when a bridging paragraph shares no surface word with the query
(Section~\ref{sec:eval-nl}). A signature that matched by meaning rather than by
shared tokens would remove this, and a learned or embedding-based signature is the
natural next step; we use a lexical one here because it is deterministic, disclosed,
and needs no training, at the price of this recall boundary.

Second, isolated recovery is not purely latent. Writers and authorized query clients normalize
text before mapping its words to opaque tags, and the reader recovers each fragment
by decoding a short text restatement before composing. Keyless replicas see no plain
signature words, but the tags still expose equality, co-occurrence and access
patterns, so this is pseudonymized lexical addressing rather than private or
semantic search. This keeps the method training-free and deterministic, but it is a
step short of a read that stays in latent space from end to end. A signature and a
reader that operate on the cache without writer-side text or decoded restatements
are the direction we think most worth pursuing next. Addressing a fragment by its position in a canonical
order, rather than by its content, is a further complementary route we leave to
future work.

Third, final composition remains a reader property. On the hub a symbolic composer
over the read's output answers at $0.99$ while the frozen model answers at $0.40$
(Section~\ref{sec:eval-boundary}), and across model families the gap narrows as the
reader strengthens (Appendix~\ref{app:llama}). The store's part and the reader's
part come apart: MaSRead isolates and recovers a located fragment, but a stronger
or task-tuned reader is needed to move the second number.

The routed anomaly exercise exposes an additional upstream boundary. Its writer is
given an application threshold and emits a compact textual record before that
record is encoded as KV; it is neither unsupervised anomaly discovery nor a direct
latent-to-latent distillation protocol. The writer misses or misrecords two of 20
target anomalies and falsely labels two of 160 normal windows. MaSRead follows
those records rather than fixture truth, as a content-addressed store should. A
future system therefore needs to improve or verify payload formation separately
from improving the read.

Fourth, isolation is not constant-time search. Equation~\ref{eq:cost} makes the
distinction explicit: routing scans store metadata, $|V|$ may grow with lexical
connectivity, and every visited fragment is recovered and composed. Only the
materialized read of one already located block is independent of $|S|$. We report
decoded lengths and asymptotic structure rather than an end-to-end serving latency
benchmark.

Finally, the evidence has limits of scope. The powered results are on one model
family, Qwen at three scales, with a cross-family check on Llama; the
natural-language evidence is two multi-hop datasets at $k=2$; and several cells are
small ($n=20$ to $50$), with intervals in Appendix~\ref{app:stats}. The synthetic
families are lexically connected by construction, so their coverage of $1.00$ is in
part a property of the generator and not only of the router, which is why the
DISCOUNT counterexample and the coverage below one on natural text are the honest
tests of routing.

\subsection{Relation to text retrieval}
\label{sec:disc-textret}

The read we study is not text retrieval, and we do not benchmark against it. A
pipeline that stores and re-reads text is a different regime: it presupposes that
the source text is retained, which the setting here does not provide, and on
encode-only content it would likely win on verbatim accuracy and on storage. We
claim addressability, robustness to contamination, and read cost, not superiority
over text on those axes; the point of the setting is that agents may share distilled
reasoning rather than raw text, in which case there is no text to retrieve and the
question is only whether the latent payload can be read at all. A matched
comparison is therefore needed to establish when a latent payload is preferable:
it should hold source information and model capability fixed and measure storage,
write cost, routing cost, recovery and final-answer accuracy, and end-to-end
latency. We leave that regime-selection study to future work rather than infer it
from the present experiments.

\section{Conclusion}
\label{sec:conclusion}

When independent agents reason in latent space, their key-value caches accumulate
into a replicated store that a conflict-free merge keeps convergent under any order
or multiplicity of delivery. That store holds what the agents worked out, but
reading it whole does not recover it: the fragments interfere, and colocation is not
addressability. We repair the read by addressing it to content. A masked signature
read routes to a fragment by an opaque keyed lexical signature and decodes it under a hard
attention mask that hides the rest. When the required evidence is lexically
connected to the query, a walk over the signature graph reaches its fragments. On
our chain, pipeline, symmetric, hub, and natural-language stores, visited fragments
are recovered in isolation; the recovery remains effective as unrelated fragments
accumulate and transfers to a different model family. Once a fragment is located,
its materialized read cost does not grow with the store, although routing and the
number of reads remain store dependent.

We have been equally explicit about where the method stops. The store delivers the
facts a query needs when routing reaches them; composing them into an answer is bounded by the reader, and we
locate that limit rather than hide it. The routing is lexical beneath its opaque tags, so a fragment a query
cannot reach by shared lexical terms is a fragment the read cannot use. And the read is not
yet purely latent: text is normalized by authorized endpoints for routing and is
decoded again for recovery. These boundaries define concrete directions for
future work. Most consequential is the last: a signature and reader that operate
on the cache without decoding it to words would make the read latent from end to
end.

The claim we make is narrow. Once agents reason in latent space, the store they
produce can be read for queries they never saw, when those queries are connected by
content to the fragments they need, provided the store is addressed rather than
merely colocated. Addressing, not merging, is the operation that makes a replicated
latent store usable, and it is the operation this paper supplies.

\appendix

\section{Anomaly probes: retention and routed extraction}
\label{app:anomaly}

\subsection{Retention and salience in one undistilled cache}

A small probe supporting the motivation of Section~\ref{sec:intro}: a single
agent (Qwen3-1.7B) reads a short event stream \emph{query-blind}. The stream is
a ten-entry temperature log, every reading near $20^{\circ}$C except one at
$87.6^{\circ}$C. The log precedes any question in the sequence, so by causal
attention its KV cache does not depend on the question; a matched control
replaces the anomaly with $20.5$. Decoding is greedy.

\textbf{Behavioral.} Reading the query-blind cache, the agent \emph{names the
anomaly on a generic prompt}: asked only ``is there anything notable?'' it
answers ``a notable spike in temperature at 08:25, reaching $87.6^{\circ}$C,
significantly higher than the surrounding readings.'' Asked which reading is
anomalous it returns $87.6$ exactly. On the control it reports ``no significant
anomalies or extreme values'' and never emits $87.6$.

\textbf{Cache inspection.} We measure the attention each reading receives from
the question tokens, averaged over all layers and heads (a content-to-content
comparison, which avoids the attention-sink artifact of absolute attention
mass). The anomalous reading is the most attended of the ten
(Table~\ref{tab:anomaly}), receiving about $1.9\times$ the mean regular reading.

\begin{table}[ht]
\centering
\small
\begin{tabular}{lc}
\toprule
reading & attention received \\
\midrule
\textbf{87.6 (anomaly)} & \textbf{0.0032}\ \ (rank 1/10) \\
20.3 & 0.0027 \\
19.9 & 0.0015 \\
20.2 & 0.0010 \\
\bottomrule
\end{tabular}
\caption{Attention each reading receives from the question tokens (mean over all
layers and heads; representative rows). The anomaly is rank $1/10$.}
\label{tab:anomaly}
\end{table}

\textbf{Scope.} This first probe is illustrative (one model, one toy stream): it shows the
anomaly is \emph{retained, recoverable, and salient} under query-blind encoding.
It does not demonstrate a compact computed payload. Reproduce with
\texttt{paperJMLR/probes/anomaly\_probe.py}. The probe writes only to standard
output, and no historical capture was retained. The accompanying capture is an
authorized greedy regeneration of this fixed fixture; its checksum, Qwen snapshot,
and software environment are recorded in the results manifest.

\subsection{Full-system routed fixed-rule extraction}

We next test a compact computed payload without changing the MaSRead mechanism.
Each fixture contains one ten-reading temperature window with exactly one value
above $40.0^{\circ}$C, a matched all-normal window, and up to seven unrelated
all-normal windows of the same textual form. Before any read query is known, a
Qwen3-1.7B writer receives one local window and the fixed application rule
``anomaly iff one reading exceeds $40.0^{\circ}$C.'' It greedily emits exactly
three fields: status, timestamp, and one-decimal value. A strict parser rejects
malformed or incoherent output; it neither checks against fixture truth nor
repairs or retries a result.

For an anomaly record, the system adds a fixed controlled descriptor set
\{\texttt{anomaly}, \texttt{anomalous}, \texttt{outlier}, \texttt{spike},
\texttt{notable}\}. It then query-blind encodes the compact record as a KV
fragment and constructs the ordinary MaSRead sidecar. As in
Section~\ref{sec:masread}, alphabetic content words are mapped to opaque
HMAC--SHA256 tags; neither the measurement nor timestamp contributes a signature
word. The authorized query client maps its words under the same key. Thus a query
such as ``report the outlier reading'' can route through \texttt{outlier} and only
then recover the numerical value from the masked payload. A raw log has no usable
signature after same-format boilerplate removal, and a timestamp-only query has no
alphabetic bridge.

We developed this version after preserving an unsuccessful preliminary writer
smoke test that used an underspecified ``obvious outlier'' instruction. We froze
the threshold, prompt, strict parser, and direct non-thinking chat protocol after
a separate seed-314159 writer gate and one disjoint full-path smoke. The evaluation
then ran once on seed 42, items 1--20; the preliminary test's observed item 0 was
excluded prospectively. For each item we evaluate four fixed query wordings with
$D\in\{1,3,7\}$ routine windows, giving 240 end-to-end records and 180 distinct
one-shot writer records. There are no output retries, exclusions, or gold repairs.

\begin{table}[ht]
\centering
\small
\begin{tabular}{@{}lr@{}}
\toprule
stage or control & frozen result \\
\midrule
writer format valid & $180/180$ \\
writer anomaly status / exact value & $19/20$ / $18/20$ \\
writer normal status & $158/160$ \\
target seeded / target first & $224/240$ / $216/240$ \\
gold value in masked target restatement & $216/240$ \\
gold value in final facts-only answer & $208/240$ \\
raw-log walk empty & $240/240$ \\
all-normal walk empty & $224/240$ \\
actual timestamp-only walk empty & $240/240$ \\
\bottomrule
\end{tabular}
\caption{Routed fixed-rule anomaly extraction. Counts separate writer detection,
opaque-tag routing, masked recovery, and final composition; errors are retained.}
\label{tab:routed-anomaly}
\end{table}

The decomposition is exact. One target is labeled normal and never routes; a
second target is labeled anomalous but with the wrong timestamp and value. Those
two writer records account for every masked gold-value failure. Two normal windows
are falsely labeled anomalous, producing the only routine visits and all-normal
control failures. For the 18 correct target records, masked timestamp and value
recovery is exact for every query and $D$. Final composition then preserves the
value in $208/216$ such cases: three errors use a contaminated fact set created by
a writer false positive, and five are truncated explanatory answers under the
frozen 32-token budget. We observed no independent HMAC or selector failure.

This is stronger than the retention probe because it exercises local finding
formation, compact sharing, opaque routing, masked recovery, and composition in
one system. Its scope remains precise: it is threshold-rule extraction through a
textual compact-record interface, not general anomaly discovery and not an agent
emitting a directly distilled latent without an intermediate record. Reproduce
with \texttt{paperJMLR/probes/routed\_anomaly\_probe.py}; the frozen artifacts,
protocol hash, environment, and complete failure audit are listed in the results
manifest.

\section{Scale: the read at 4B and 8B}
\label{app:scale}

The mechanism panel of Section~\ref{sec:masread} is measured at 1.7B. Here we
repeat it at 4B and 8B (Qwen3-4B and Qwen3-8B), under the same query-blind
two-fragment encodes and the same three reads. Table~\ref{tab:scale-panel} gives
all three scales.

\begin{table}[ht]
\centering
\small
\begin{tabular}{lccc}
\toprule
read & 1.7B & 4B & 8B \\
\midrule
open (colocated)      & $0.625$ & $0.525$ & $0.525$ \\
masked (target block) & $1.000$ & $0.750$ & $0.875$ \\
wrong-mask control    & $0.000$ & $0.050$ & $0.000$ \\
\bottomrule
\end{tabular}
\caption{Mechanism panel across scale (query-blind, $n=40$ pairs). At every scale
the masked read sits far above both the open read and the wrong-mask control, so
the mask, not the query, governs which fragment is read. The masked row is
$1.000$, $0.750$, $0.875$: the 4B value is a non-monotonic dip, not a trend.}
\label{tab:scale-panel}
\end{table}

The addressing result holds at every scale: the masked read is far above the open
read and the wrong-mask control throughout. What changes with scale is only how
completely masking removes the interference, and it does not change monotonically.
At 1.7B masking dissolves it ($1.000$). At 4B it suppresses rather than dissolves:
the masked read ($0.750$) sits well above the open read but below the isolated
single-fragment read, which is at ceiling there ($20/20$). At 8B the masked read
recovers to $0.875$, at or above the isolated read, which is itself off ceiling at
this scale under query-blind decoding ($14/20 = 0.70$). The 4B dip is therefore
non-monotonic, and the qualitative result, that the mask decides what is readable,
is scale-invariant.

The end-to-end read holds at 8B as well. Table~\ref{tab:eightb} runs the full
protocol on the four families at 8B ($k=3$, $n=50$).

\begin{table}[ht]
\centering
\small
\begin{tabular}{lcccc}
\toprule
family & masked read & colocated & ceiling & control \\
\midrule
chain                  & $1.00$ & $0.90$ & $1.00$ & $0.02$ \\
pipeline               & $0.98$ & $0.32$ & $1.00$ & $0.02$ \\
symmetric              & $0.78$ & $0.18$ & $1.00$ & $0.00$ \\
hub                    & $0.62$ & $0.32$ & $0.88$ & $0.46$ \\
\bottomrule
\end{tabular}
\caption{End-to-end read at 8B (Qwen3-8B; $k=3$; query-blind; $n=50$). The masked
read stays well above the colocated read on every family. The no-protocol control
is at floor except on the hub, where the larger model answers the hub question from
the merged cache without the read's facts at $0.46$, above the colocated read there;
so the ordering masked above colocated above control holds on the chains and the
symmetric store but not on the hub.}
\label{tab:eightb}
\end{table}

The pattern of Section~\ref{sec:eval} survives the change of scale: the masked read
is at or near ceiling on the chains, the colocated read collapses, and the control
is at floor. The two family-level shifts are honest to report. The hub reads higher
at 8B ($0.62$) than at 1.7B, but the no-protocol control on the hub also rises to
$0.46$, the larger model answering it from the merged cache without the read's facts,
so the hub's margin over the control narrows rather than widens. The symmetric family reads lower at 8B ($0.78$) than at 1.7B; the depth
behavior at 8B is in Appendix~\ref{app:depth}.

\section{Cross-family replication (Llama)}
\label{app:llama}

The main results use the Qwen family at three scales (Appendix~\ref{app:scale}). To
test whether the read is specific to that family, we repeat the end-to-end read on
Llama at two capacities, Llama-3.2-3B and Llama-3.1-8B
\citep{llama3herd, llama32}, on the chain, pipeline, and symmetric families,
query-blind, $n=50$ per cell. The recovered launchers record the exact Hugging
Face repositories \path{unsloth/Llama-3.2-3B-Instruct} and
\path{unsloth/Meta-Llama-3.1-8B-Instruct}, both with data seed $42$. Read-only
inspection of the execution host's Hugging Face cache identifies one snapshot per
repository, referenced by \texttt{main} and timestamped before its run. The pinned
revisions are:
\begin{center}
\small
\begin{tabular}{@{}ll@{}}
3B & \texttt{006f5dcd1393c3add266de40994ba96225e9689d} \\
8B & \texttt{a2856192dd7c25b842431f39c179a6c2c2f627d1}
\end{tabular}
\end{center}
Table~\ref{tab:llama} adds a
\emph{restate} column, the per-fragment restatement exact match, which separates
what the store returns from what the reader does with it.

\begin{table}[ht]
\centering
\small
\begin{tabular}{llcccccc}
\toprule
model & family & masked & ceiling & colocated & control & coverage & restate \\
\midrule
Llama-3.2-3B & chain     & $0.46$ & $0.86$ & $0.00$ & $0.06$ & $1.00$ & $0.89$ \\
             & pipeline  & $0.24$ & $0.98$ & $0.02$ & $0.00$ & $1.00$ & $0.93$ \\
             & symmetric & $0.50$ & $0.98$ & $0.02$ & $0.00$ & $1.00$ & $0.98$ \\
\midrule
Llama-3.1-8B & chain     & $0.96$ & $0.88$ & $0.18$ & $0.20$ & $1.00$ & $1.00$ \\
             & pipeline  & $0.72$ & $1.00$ & $0.04$ & $0.00$ & $1.00$ & $1.00$ \\
             & symmetric & $0.56$ & $0.90$ & $0.20$ & $0.04$ & $1.00$ & $0.99$ \\
\bottomrule
\end{tabular}
\caption{Cross-family read on Llama at two capacities (query-blind, $n=50$ per
cell). Columns are as in Table~\ref{tab:headline}, plus \emph{restate}, the
per-fragment restatement exact match. Coverage and restatement are high throughout,
so routing and extraction transfer off the Qwen family; the masked read rises
sharply from 3B to 8B, to the full-text reference on the chain though below it on
the pipeline and symmetric families, where the residual is composition
(Section~\ref{sec:eval-boundary}).}
\label{tab:llama}
\end{table}

Three things transfer off the Qwen family. Coverage is $1.00$ on every cell, so the
signature walk reaches every required fragment on Llama as on Qwen. The colocated
read collapses, to at most $0.20$, while the masked read stays well above it, the
interference of Section~\ref{sec:failure} again. And restatement is faithful, from
$0.89$ at 3B to $1.00$ at 8B, so the masked decode extracts the fragments' contents
rather than only routing to them: the store delivers the facts off-family.

What does not transfer as a constant is the absolute read accuracy, and it tracks
the reader's capability rather than the model family. The masked read rises sharply
from 3B to 8B, from $0.24$--$0.50$ to $0.56$--$0.96$, matching the full-text
reference on the chain (the $0.96$ against $0.88$ difference is within sampling
noise at $n=50$) though staying below it on the pipeline and symmetric families.
That the variable is reader quality and not parameter count is clear from
Qwen3-1.7B, which reads at $0.97$ where the larger Llama-3.2-3B reads at $0.24$; the
two Llama models differ in both size and generation, so we do not separate scale
from architecture, and claim only that a more capable reader reads the store
better.

Where the 8B read still sits below the reference, on the pipeline ($0.72$ against
$1.00$) and the symmetric store ($0.56$ against $0.90$), extraction is not the
obvious cause: restatement there is exact ($1.00$ and $0.99$). That match counts
numeric recovery, not whether every relation and binding was preserved, so unlike
the hub's symbolic-composer test (Section~\ref{sec:eval-boundary}) it shows the
residual is consistent with composition rather than fully localizing it there. The
residual moves with the reader, widest at 3B, narrower at 8B, and closed on the
chain; the cross-family evidence is that the store and its addressed read transfer,
and the answer quality that remains is bounded by the reader.

\section{Depth: \texorpdfstring{$k{=}5$}{k=5} stores}
\label{app:depth}

The main results use three-fragment stores. This appendix pushes the read to
five, on two families that extend to depth: an additive chain in which every hop
is jointly necessary, and a five-way hub.

\begin{table}[ht]
\centering
\small
\begin{tabular}{lccccc}
\toprule
family & query-anticipated & masked read & ceiling & colocated & coverage \\
\midrule
additive chain (5 hops) & $0.60$ & $1.00$ & $1.00$ & $0.65$ & $1.00$ \\
hub ($k{=}5$)           & $0.05$ & $1.00$ & $1.00$ & $0.75$ & $1.00$ \\
\bottomrule
\end{tabular}
\caption{Five-fragment stores at 1.7B ($n=20$ per cell). \emph{query-anticipated}
is a superseded reference in which the query was present while the fragments were
encoded; \emph{masked read} is the canonical query-blind read. The query-blind read
reaches the full-text ceiling on both families with coverage $1.00$.}
\label{tab:k5}
\end{table}

Depth is not the obstacle it first appears. Read with the query present in the
encode, the five-fragment stores look hard, $0.60$ on the chain and $0.05$ on the
hub. That difficulty is an artifact of query-anticipated encoding: the masked read
of a fragment encoded alongside the query restates the embedded query text rather
than the fragment's content (the sentence-versus-copy control of
Appendix~\ref{app:signatures}). Under the canonical query-blind encode, which is
also the reusable-memory regime the paper targets, both families reach the ceiling
with full coverage, so five-fragment stores need no new mechanism. The colocated
read again sits below the masked read ($0.65$ and $0.75$), the interference of
Section~\ref{sec:failure} at depth. One caveat: this $k=5$ hub is a different, and
easier, generator than the $k=3$ hub of Section~\ref{sec:eval-boundary} (its
ceiling is $1.00$, not $0.92$), so the depth result does not speak to that
family's compose boundary.

The read holds up a depth ramp at 8B as well. Table~\ref{tab:depth8b} runs $k=3$
through $k=5$ on the additive chain and the hub at 8B.

\begin{table}[ht]
\centering
\small
\begin{tabular}{llcccc}
\toprule
$k$ & family & masked read & colocated & ceiling & control \\
\midrule
$3$ & additive chain & $0.95$ & $0.45$ & $1.00$ & $0.00$ \\
$3$ & hub            & $0.95$ & $0.25$ & $1.00$ & $0.00$ \\
$4$ & additive chain & $0.80$ & $0.00$ & $1.00$ & $0.00$ \\
$4$ & hub            & $0.90$ & $0.00$ & $1.00$ & $0.00$ \\
$5$ & additive chain & $0.85$ & $0.00$ & $1.00$ & $0.00$ \\
$5$ & hub            & $0.95$ & $0.00$ & $1.00$ & $0.00$ \\
\bottomrule
\end{tabular}
\caption{Depth ramp at 8B (Qwen3-8B; query-blind; $n=20$ per cell). Across $k=3$ to
$k=5$ the masked read stays high while the colocated read collapses to zero and the
control is at floor. $k=2$ is the base case of the mechanism panel
(Appendix~\ref{app:scale}); the additive-chain generator is degenerate at two
fragments and is omitted here. The hub here is the easier depth-scalable generator
of the $k=5$ table above, not the hub of Table~\ref{tab:eightb} and
Section~\ref{sec:eval-boundary}, so its accuracy is not comparable to those.}
\label{tab:depth8b}
\end{table}

At 8B the masked read stays between $0.80$ and $0.95$ as the store grows from three
to five fragments, while the colocated read falls to zero by $k=4$ and the control
stays at floor. Deeper stores cost the addressed read a little accuracy but do not
break it, and the gap over the colocated read widens with depth rather than
closing. Figure~\ref{fig:depth8b} plots the same reads across the ramp.

\begin{figure}[ht]
\centering
\begin{tikzpicture}
\begin{axis}[
  width=0.78\textwidth, height=5.4cm,
  xlabel={store size $k$}, ylabel={accuracy},
  xmin=2.7, xmax=5.3, xtick={3,4,5},
  ymin=-0.05, ymax=1.08, ytick={0,0.25,0.5,0.75,1.0},
  legend style={at={(1.02,0.5)}, anchor=west, font=\footnotesize},
  legend cell align={left},
  grid=major, grid style={gray!15},
  tick label style={font=\small}, label style={font=\small},
]
\addplot[blue!70, thick, mark=square*]
  coordinates {(3,0.95)(4,0.90)(5,0.95)};
\addlegendentry{hub, masked}
\addplot[blue!70, thick, densely dashed, mark=square]
  coordinates {(3,0.25)(4,0.00)(5,0.00)};
\addlegendentry{hub, colocated}
\addplot[orange!90!black, thick, mark=*]
  coordinates {(3,0.95)(4,0.80)(5,0.85)};
\addlegendentry{chain, masked}
\addplot[orange!90!black, thick, densely dashed, mark=o]
  coordinates {(3,0.45)(4,0.00)(5,0.00)};
\addlegendentry{chain, colocated}
\end{axis}
\end{tikzpicture}
\caption{The 8B depth ramp from $k=3$ to $k=5$ (query-blind, $n=20$ per point; the
points of Table~\ref{tab:depth8b}). As the store grows the colocated read collapses
on both families while the masked read stays high.}
\label{fig:depth8b}
\end{figure}

\section{Latent-computation ablation: full grid}
\label{app:ablation}

Section~\ref{sec:eval-deconfound} cuts the latent-computation confound by varying
the encoding agent's latent budget $\ell$, the number of latent reasoning steps it
takes before it encodes a fragment. At $\ell=0$ the agent does none and the cache
holds only the encoded text. Table~\ref{tab:ablation} gives the full grid: the
masked read, query-blind, seed $42$, $n=100$ per cell, under the canonical read
configuration of Appendix~\ref{app:repro}.

\begin{table}[ht]
\centering
\small
\begin{tabular}{lccc}
\toprule
family & $\ell{=}0$ & $\ell{=}20$ & $\ell{=}40$ \\
\midrule
chain                  & $0.93$ & $0.93$ & $0.93$ \\
pipeline               & $0.99$ & $1.00$ & $1.00$ \\
symmetric              & $1.00$ & $1.00$ & $1.00$ \\
hub                    & $0.15$ & $0.23$ & $0.40$ \\
\midrule
\multicolumn{4}{@{}l}{\emph{hub diagnostics, per $\ell$:}} \\
\quad coverage               & $1.00$ & $1.00$ & $1.00$ \\
\quad hub extraction (exact) & $1.00$ & $1.00$ & $1.00$ \\
\bottomrule
\end{tabular}
\caption{Masked-read accuracy as the encoding agent's latent budget $\ell$ varies
(query-blind, seed $42$, $n=100$ per cell). Three families are flat: removing
latent computation entirely costs nothing, so the read addresses the encoded fact
rather than exploiting computation over it. The hub rises with $\ell$, but its
coverage and its exact hub extraction are $1.00$ at every budget, so the store
returns every value regardless of $\ell$; what the extra steps move is downstream
of that output, in composing the recovered values into the answer, not the
addressing.}
\label{tab:ablation}
\end{table}

The three flat rows are the deconfound. Their read is unchanged from $\ell=40$ down
to $\ell=0$, where the agent contributes no computation at all, so the masked read
recovers what was encoded rather than what was reasoned. The hub is the one family
whose answer moves with $\ell$, and the diagnostic rows localize the movement: at
every budget the walk reaches the hub fragment (coverage $1.00$) and the store
returns its full contents (exact extraction $1.00$), so the values the answer needs
are present regardless of $\ell$. What the latent steps move is downstream of that
output, in composing the recovered values into the answer, a property of the reader
(Section~\ref{sec:eval-boundary}) rather than of the addressing; we do not localize
it further.

\section{Signature construction and negative results}
\label{app:signatures}

\paragraph{Construction.}
A writer extracts its normalized word set locally, then maps every word to a
$128$-bit tag using HMAC-SHA256 with domain separator
\texttt{masread/signature/v1} and the store-epoch key. The public key identifier
is the first $128$ bits of SHA-256 of the key; the key itself is never serialized.
The immutable sidecar contains the raw enumerable tag set, its schema and key
identifiers, and a digest binding those fields. A payload identifier in turn binds
the historical content hash to that sidecar digest. Replicas reject mixed
schema/key epochs, an invalid sidecar digest, or an invalid payload identity. Key
rotation therefore creates a new epoch and requires retagging.

The set-relative boilerplate intersection is computed after merge in tag space,
not by the writer: adding an element can shrink that intersection and restore tags
to existing final signatures. A $128$-bit Bloom filter over each final tag set can
accelerate membership tests (\emph{does this fragment contain tag $t$}), but the
enumerable exact set grows the frontier; Algorithm~\ref{alg:masread} never attempts
to enumerate Bloom bits. Matching and frontier growth are set operations and are
order-independent, while the visit order uses the existing content-derived
tie-break. The reported runs use exact tag overlap (\texttt{token\_overlap}); a
model-free audit over all available evaluation items verifies that substituting
tags for normalized words leaves every seed score, visited set, and visit order
unchanged. Thus no model result is regenerated for this representation change.

For the short synthetic signatures a $128$-bit Bloom filter has a negligible
false-positive rate; on long natural-language paragraphs it is more saturated.
A false positive over-visits: the extra fragment is isolated under its own mask,
but its tags then enter the frontier and can draw in further fragments, so the
cost can cascade rather than stopping at one visit.

\paragraph{A text-free selector does not work.}
We asked whether a fragment could be addressed with no writer-side text at all,
from its key-value keys alone: a SimHash sketch over the fragment's key tensors at
a middle layer. If it worked, selection would live in the same key space as the
interference of Section~\ref{sec:failure}. It does not.
Table~\ref{tab:keysketch} compares it against the lexical signature ($n=40$
pairs).

\begin{table}[ht]
\centering
\small
\begin{tabular}{lcccc}
\toprule
 & \multicolumn{2}{c}{block selected (top-1)} & \multicolumn{2}{c}{masked read} \\
\cmidrule(lr){2-3}\cmidrule(lr){4-5}
selector & template & shared-unit & template & shared-unit \\
\midrule
lexical signature       & $1.00$ & $1.00$ & $1.00$ & $0.98$ \\
key-sketch (KV keys)    & $0.78$ & $0.45$ & $0.82$ & $0.42$ \\
\bottomrule
\end{tabular}
\caption{A text-free selector built from the fragment's key-value keys, against
the lexical signature ($n=40$ pairs; sketch at layer $15$, best over bit-budgets
$\{16,32,64\}$). The key-sketch selects the right block only $0.45$ to $0.78$ of
the time, and the masked read on its choice trails accordingly, where the lexical
signature selects and reads at ceiling. Addressing needs a lexical handle; opaque
tags avoid disclosing the words themselves to keyless replicas while retaining
equality and access-pattern leakage (Section~\ref{sec:disc-textret}).}
\label{tab:keysketch}
\end{table}

\paragraph{The query-echo artifact.}
One further negative explains a difficulty that is not one. When a fragment is
encoded with the query already present, its block contains the query text, and a
masked read can restate that embedded query rather than the fragment's own content.
This is why five-fragment stores looked hard under query-anticipated encoding
($0.60$ on the chain, $0.05$ on the hub) and reached the ceiling once the query was
removed (Appendix~\ref{app:depth}). A sentence-versus-copy control localizes it:
comparing a free sentence restatement, which can echo the embedded query, against a
copy-prompted restatement of the block attributes the gap to the echo rather than to
a read deficit or to depth. The paper's canonical configuration encodes query-blind
and restates by copy prompt, so the artifact does not arise in any reported number.

\section{Determinism and CRDT properties}
\label{app:determinism}

This appendix states the convergence property behind Section~\ref{sec:store} and
the materialized read behind the cost claim of Section~\ref{sec:eval-scale}.

\paragraph{Convergence.}
The store is a grow-only set of fragments keyed by content hash, and the merge is
set union. Union is commutative, associative, and idempotent, so replicas that
have received the same fragments hold the same set regardless of the order or the
multiplicity in which the fragments arrived, and converge once delivery is
eventual; the agreement is exact up to hash collisions. The consequence used
throughout the paper is stronger than set agreement: the render of a set of
fragments is \emph{byte-identical} under any permutation or duplication of the
inputs, a determined RoPE layout rather than an order-dependent concatenation.
That property, with its proof and its verification on the real key-value state of
Qwen3-1.7B ($28$ layers) and Qwen3-4B ($36$ layers), is established in the
companion work on cache merging \citep{baquero2026cachemerging}; it is what makes
the order and duplication robustness of this paper structural rather than
something the read has to tolerate.

\paragraph{The materialized read.}
The canonical layout places each fragment's block at a determined position in the
render: the first block keeps the native coordinates $[0,\text{len})$ it was
encoded at, and each later block is rotated to begin where the previous one ends.
The offset is exactly what a query-blind position cannot address
(Section~\ref{sec:failure}), since the same fragment lands at a different offset in
a different store. An in-place masked read leaves the block at its offset and
decodes over the whole render, so its cost grows with the store. A
\emph{materialized} read instead extracts the block and reverses its rotation by
that offset, restoring the native coordinates near $0$, and reads it standalone.
Table~\ref{tab:rebased} runs both reads up a $k$-ramp.

\begin{table}[ht]
\centering
\small
\begin{tabular}{lcccc}
\toprule
$k$ & render length & colocated & masked (in place) & masked (materialized) \\
\midrule
$2$  & $292$  & $0.60$ & $1.00$ & $1.00$ \\
$4$  & $585$  & $0.05$ & $1.00$ & $1.00$ \\
$8$  & $1172$ & $0.00$ & $1.00$ & $1.00$ \\
$16$ & $2343$ & $0.00$ & $1.00$ & $1.00$ \\
\bottomrule
\end{tabular}
\caption{The read up a $k$-ramp ($n=20$ per cell). The render grows roughly
linearly with the store, from $292$ to $2343$ tokens, and the colocated read
collapses. Both masked reads stay at $1.00$; the materialized read attains it while
decoding a fixed $147$-token block at every $k$, so its cost is independent of the
store size, which is the read-cost claim of Section~\ref{sec:eval-scale}. Both reads
are at $1.00$ over this range, so we do not observe the in-place read degrading at
larger offsets; materializing removes any such dependence by construction, since it
reads at native coordinates regardless of the block's offset in the render.}
\label{tab:rebased}
\end{table}

Both reads recover the target at every store size, so the materialized view is not
a different mechanism but the same read relocated to fixed coordinates. Its value
is the cost: where the in-place read attends over a render that grows with every
fragment merged, the materialized read decodes a block whose length does not depend
on how large the store has become.

\section{The precision boundary}
\label{app:precision}

Both the failure the read repairs and the failure the read itself has are set by
one thing: how lexically similar the fragments are.

\paragraph{Interference is a lexical effect.}
The colocation interference of Section~\ref{sec:failure} is not uniform. We read one
fragment beside a single partner and vary the partner (Table~\ref{tab:ladder}). A
partner from a different family leaves the read almost untouched ($0.97$); a partner
that shares the template, or even one unit, drives it down to about $0.4$, and most
of the errors are the partner's value substituted for the target's. Interference,
and therefore the need to address rather than colocate, arises precisely when
fragments are lexically similar, as the evaluation families are by construction.

\begin{table}[ht]
\centering
\small
\begin{tabular}{lcc}
\toprule
partner beside the read fragment & $n$ & accuracy \\
\midrule
none (isolated)               & $100$ & $1.00$ \\
different family              & $100$ & $0.97$ \\
same template, disjoint units & $200$ & $0.40$ \\
same template, one shared unit & $148$ & $0.44$ \\
\bottomrule
\end{tabular}
\caption{Interference as a function of partner similarity (query-blind read of one
fragment beside one partner). A dissimilar partner barely affects the read; a
lexically similar one collapses it, mostly by substituting its own value.}
\label{tab:ladder}
\end{table}

\paragraph{Routing recall is also lexical: the \textsc{discount} boundary.}
The same dependence bounds the read from the other side. The walk routes by shared
words, so a required fragment that shares no routing token with the query or with an
already-visited fragment is never reached. The \textsc{discount} family is built to
contain exactly one such fragment: a store of a per-unit price, a group-discount
rule, and a party composition, asked for the total. The question seeds the walk on
the discount-rule and party fragments, which echo its words, but the price fragment
shares no routing token and is omitted on every one of the $100$ items, so coverage
is a fixed $2/3$. Because the price is needed, the masked read collapses to $0.01$
against a full-text ceiling of $1.00$. Here, unusually, the colocated read beats it,
at $0.84$: keeping every fragment in scope preserves the price the walk cannot
reach, so the addressed read's own selectivity is the liability. This is the
deterministic worst case of the lexical-mismatch soft spot of
Section~\ref{sec:eval-nl}: a routing index tolerant of paraphrase or meaning, rather
than of shared tokens, would close it, and lexical routing cannot.

\section{Statistical procedures}
\label{app:stats}

\paragraph{Metrics.}
On the synthetic families the answer is an integer, and a cell's accuracy is the
fraction of items whose parsed, normalized answer equals the gold. On the
natural-language sets the answer is a free-form string, scored by the standard
HotpotQA token-level $F_1$ and exact match; a cell reports the mean $F_1$ over its
items. Coverage is the fraction of a query's required fragments the walk reaches,
and hub extraction is the fraction of hub items whose full contents the read
returns exactly.

\paragraph{Intervals.}
Where an interval is shown, for example the bracketed intervals in the headline of
Table~\ref{tab:headline}, it is the $95\%$ Wilson score interval for a binomial
proportion. We use the Wilson interval rather than the normal (Wald) approximation
because several cells sit near $0$ or $1$, where Wald intervals are too narrow and
can leave the unit interval. Tables without an interval report a point estimate at
the sample size stated in their caption.

\paragraph{Sample sizes.}
Each table and figure states its own per-cell $n$. In summary: the headline is
$n=300$; the mechanism panel and the scale panel are $n=40$ pairs; the
contamination sweep and the natural-language rows are $n=50$; the latent ablation
and the read-versus-answer decomposition are $n=100$; and the five-fragment, depth,
and $k$-ramp tables are $n=20$ per cell.

\paragraph{Seeds and pooling.}
Decoding is greedy, so a run is deterministic given its data seed and the only
randomness is which problems are drawn. The headline of Table~\ref{tab:headline}
pools three independent data seeds ($42$, $7$, $100$), each drawing a fresh set of
$100$ problems per family; the reported accuracy is computed over the pooled $300$
items and the Wilson interval is taken on that pooled count, so we pool the draws
rather than averaging three per-seed accuracies. All other cells use the single
data seed $42$ at the sample size given in their caption.

\section{Datasets and reproducibility}
\label{app:repro}

This appendix gives the store constructions and the exact read configuration
behind every number in the paper.

\paragraph{Synthetic stores.}
All four families are generated by \texttt{data\_partitioned\_k3.py}, three
fragments each, with the numbers and content words drawn at random per item;
Table~\ref{tab:families} shows one instance of each. In every family the three
fragment texts are produced without the question, which is a separate field, so
the encode is query-blind. \emph{Chain} (unit conversion): the fragments state
$1\,\text{big}=K\,\text{mid}$, $1\,\text{mid}=J\,\text{small}$, and a quantity $N$
(a multiple of $KJ$) in the small unit; the answer is $N/(KJ)$. \emph{Pipeline}
(affine): two machines $x\mapsto P_1x+Q_1$ and $y\mapsto P_2y+Q_2$ and a starting
count $X_0$; the answer is $P_2(P_1X_0+Q_1)+Q_2$. \emph{Symmetric} (constraint
system): a ratio $A=NB$, a difference $C=B-K$, and a total $A+B+C$, solved jointly
for $A$. \emph{Hub} (indexed lookup): two rate tables, a colour-to-points map and
a size-to-multiplier map, and one inventory fragment listing several
(colour, size, count) items; the answer sums $\text{points}\cdot\text{mult}\cdot
\text{count}$ over the items. The headline of Table~\ref{tab:headline} pools three
data seeds ($42$, $7$, $100$), $n=100$ items per family per seed.

\paragraph{Natural-language stores.}
The two real-text sources are built to the same interface: an item's supporting
paragraphs become the store's fragments and its remaining paragraphs are held as
distractors, of which we merge in $d$. HotpotQA uses the \texttt{distractor}
validation split, keeping bridge questions with exactly two supporting paragraphs
(its ${\sim}8$ context paragraphs are the distractors). MuSiQue uses
\texttt{bdsaglam/musique} (answerable, validation), keeping the 2-hop subset with
two supporting paragraphs (its ${\sim}18$ non-supporting paragraphs are the
distractors, a harder locate test by construction). Both are scored by the
standard HotpotQA token-level $F_1$ and exact match, on free-form string answers.
We report $d=0$ (supporting only) and $d=8$.

\paragraph{Model and read.}
The main results use Qwen3-1.7B \citep{qwen3} with greedy decoding. Fragments are encoded
canonically and query-blind, each with $40$ latent reasoning steps (varied only in
Appendix~\ref{app:ablation}); the store merges by content hash into the
byte-identical render of Section~\ref{sec:store}. The read is the canonical
configuration throughout: route by the lexical signature (\texttt{token\_overlap},
the exact opaque tag set; the $128$-bit Bloom filter is an accelerator only,
Appendix~\ref{app:signatures}), decode each visited block under its mask into a
copy-prompted restatement of at most $64$ tokens, and compose. The $64$-token budget
is verified non-truncating against a $256$-token rerun (Section~\ref{sec:eval-nl}).
Cross-model (Llama) and larger scales (4B, 8B) are in Appendix~\ref{app:llama} and
Appendix~\ref{app:scale}.

The Llama replication ran under Python 3.11.2, Transformers 4.55.4,
\texttt{huggingface\_hub} 0.36.2, Tokenizers 0.21.4, and PyTorch
2.5.1+cu124 on Tesla V100-SXM2-32GB GPUs. Exact model revisions are reported in
Appendix~\ref{app:llama}; cache evidence and configuration-file checksums are
recorded in the accompanying results manifest and snapshot report.

\paragraph{Commands.}
The end-to-end tables come from \path{scripts/b2_self_extract.py}, run from
the \texttt{LatentMAS} directory. The designated mask and $k$-ramp diagnostics
come from \path{scripts/probe_block_masked.py}, and the illustrative anomaly
probe and routed fixed-rule probe are the separate scripts named in
Appendix~\ref{app:anomaly}. HMAC tag substitution is verified model-free to
leave exact routing unchanged; the key value therefore does not change these
results when writer and query client agree. The headline, for a single seed:
\begin{verbatim}
export MASREAD_SIGNATURE_KEY_HEX=<64-or-more-hex-characters>
python scripts/b2_self_extract.py --model_name Qwen/Qwen3-1.7B \
  --source k3 --families NEONYM,RELAY,CONSTRAINT,LOOKUP \
  --n_items 100 --data_seed 42 --canonical --query_blind \
  --extract_from masked_merged --selector token_overlap \
  --signature_encoding hmac --schedule traversal --restate_mode copy
\end{verbatim}

\paragraph{Artifact availability.}
The implementation, frozen result records, provenance evidence, environment
metadata, and CPU-only verification tools are archived in the \MaSRead{}
reproducibility artifact at
\url{https://doi.org/10.5281/zenodo.21459415}. The archive does not redistribute
model weights or raw dataset distributions; selected result records containing
HotpotQA or MuSiQue excerpts retain the corresponding dataset attribution and
license notices.

\bibliography{references}

\end{document}